  \providecommand\BibTeX{{%
    \normalfont B\kern-0.5em{\scshape i\kern-0.25em b}\kern-0.8em\TeX}}}
\newtheorem{theorem}{Theorem}
\newtheorem{lemma}{Lemma}
\begin{document}

\title{Contextual User Browsing Bandits for Large-Scale Online Mobile Recommendation}


\author{Xu He, Bo An}
\affiliation{\institution{Nanyang Technological University}}
\email{{hexu0003,boan}@ntu.edu.sg}

\author{Yanghua Li, Haikai Chen}
\affiliation{\institution{Alibaba Group}}
\email{{yichen.lyh,haikai.chk}@taobao.com}

\author{Qingyu Guo}
\affiliation{\institution{Nanyang Technological University}}
\email{qguo005@ntu.edu.sg}

\author{Xin Li, Zhirong Wang}
\affiliation{\institution{Alibaba Group}}
\email{xin.l@alibaba-inc.com, qingfeng@taobao.com}

\renewcommand{\shortauthors}{Xu He et al.}

\begin{abstract}
  Online recommendation services recommend multiple commodities to users. Nowadays, a considerable proportion of users visit e-commerce platforms by mobile devices. Due to the limited screen size of mobile devices, positions of items have a significant influence on clicks: 1) Higher positions lead to more clicks for one commodity. 2) The `pseudo-exposure' issue: Only a few recommended items are shown at first glance and users need to slide the screen to browse other items. Therefore, some recommended items ranked behind are not viewed by users and it is not proper to treat this kind of items as negative samples. 
  While many works model the online recommendation as contextual bandit problems, they rarely take the influence of positions into consideration and thus the estimation of the reward function may be biased. 
  In this paper, we aim at addressing these two issues to improve the performance of online mobile recommendation. Our contributions are four-fold. 
  First, since we concern the reward of a set of recommended items, we model the online recommendation as a contextual combinatorial bandit problem and define the reward of a recommended set. Second, we propose a novel contextual combinatorial bandit method called UBM-LinUCB to address two issues related to positions by adopting the User Browsing Model (UBM), a click model for web search. Third, we provide a formal regret analysis and prove that our algorithm achieves sublinear regret independent of the number of items. Finally, we evaluate our algorithm on two real-world datasets by a novel unbiased estimator. An online experiment is also implemented in Taobao, one of the most popular e-commerce platforms in the world. Results on two CTR metrics show that our algorithm outperforms the other contextual bandit algorithms.
\end{abstract}

\begin{CCSXML}
<ccs2012>
   <concept>
       <concept_id>10010405.10003550.10003555</concept_id>
       <concept_desc>Applied computing~Online shopping</concept_desc>
       <concept_significance>500</concept_significance>
       </concept>
   <concept>
       <concept_id>10002951.10003317.10003347.10003350</concept_id>
       <concept_desc>Information systems~Recommender systems</concept_desc>
       <concept_significance>500</concept_significance>
       </concept>
 </ccs2012>
\end{CCSXML}

\ccsdesc[500]{Applied computing~Online shopping}
\ccsdesc[500]{Information systems~Recommender systems}
\keywords{Contextual bandit; Combinatorial bandit; Position bias}

\maketitle

\section{Introduction}
With the popularization of e-commerce platforms, a considerable proportion of users visit e-commerce platforms like Amazon and TaoBao by mobile devices.
In typical recommendation scenarios of these platforms, a list of items is recommended online based on the features of items and the profiles of users. Due to the limited screen size of mobile devices, only the first few items of the list are displayed on users' mobile devices at first glance. To view the rest of the list, a user needs to slide the screen to go to the next page. In this process, the user can click an item that attracts him. After viewing the details of the item, he can return to the list to browse other items and make multiple clicks. If none of these items attracts the user in one or two pages, he is likely to quit from the recommendation scenario or the application.

In the aforementioned process, positions of items have a significant influence on clicks, since the probability of examination (a user views an item) is normally larger if the rank of an item is higher, which is called position bias. Moreover, it is possible that the user leaves the scenario without browsing all items and this phenomenon is called pseudo-exposure. For example, if we recommend a list of items to a user, he clicks the first item to view the details of it, and then returns to the recommended list to browse two commodities at position 2 and 3. The rest of commodities are not viewed. However, due to the cost of data transmission and limited computation in mobile devices, it is tricky to accurately know which items are viewed. In many cases, we only know that he clicks the first one. Items behind position 3 are viewed as negative samples leading to biased rewards, since most of the existing bandit researches assume that all the recommended items are browsed by users.
\begin{figure*}[!t]
 \centering
 \subfigure[]{\includegraphics[width=.3\textwidth]{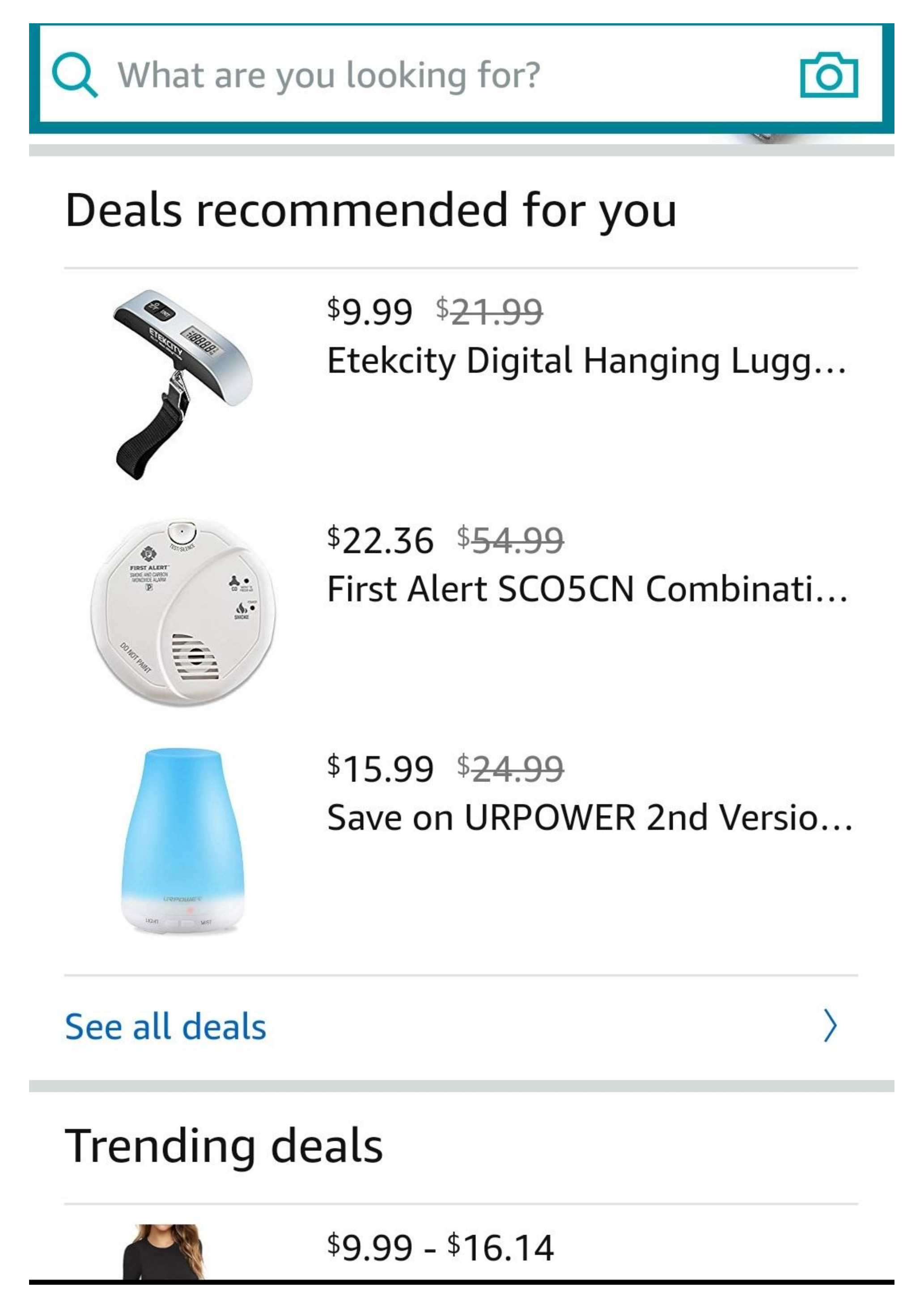}\label{e_figure_1}}\hspace{1cm}
 \subfigure[]{\includegraphics[width=.35\textwidth]{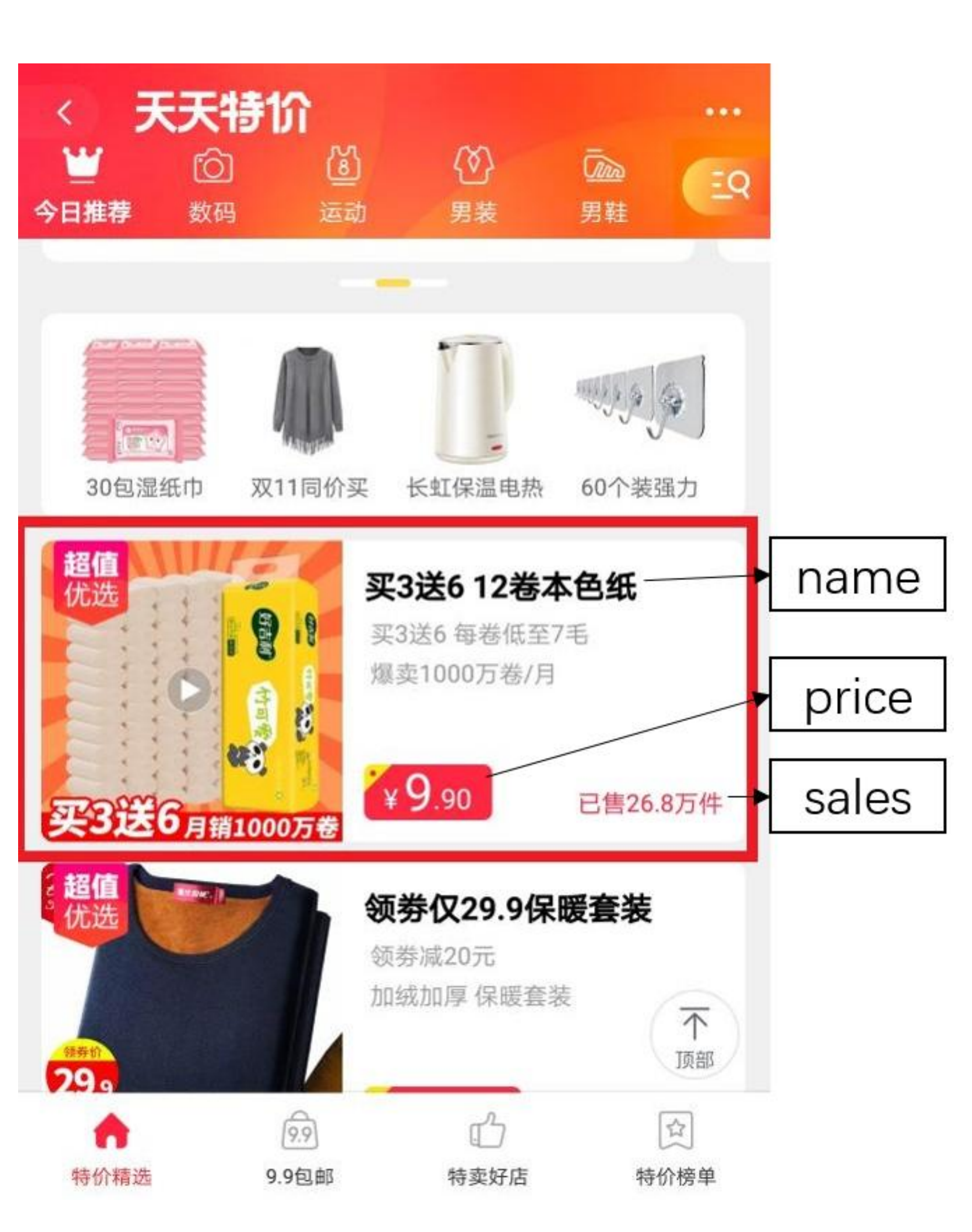}\label{tttj}}
 \caption{Pages in some e-commerce platforms}\label{hybrid_set}
\end{figure*}

This paper aims at addressing the position bias and the pseudo-exposure problem to improve the performance in the online recommendation. We adopt the contextual bandit framework, which is widely used for online recommendation. There are a few challenges that should be addressed to apply the bandit framework in our problem. First, a set of arms should be selected each round since a list of items is recommended in the online recommendation. However, traditional contextual multi-armed bandit algorithms, such as LinUCB \cite{li2010contextual}, only choose one arm each round and thus cannot be applied in our problem directly.
Second, the position bias and pseudo-exposure issue impact our dataset. If we directly learn from such data, clicks are biased due to the influence of positions and some items that are not viewed by users are treated as negative samples. Thus, the learned user preferences could be inaccurate. 
Although position bias is widely studied in recommender systems, only a few works consider it in the bandit domain. 
Though combinatorial bandit algorithms \cite{radlinski2008learning,kakade2008efficient,katariya2016dcm,kveton2015cascading} have been proposed to pull a set of arms in each round, existing works rarely consider the position bias issue in contextual combinatorial bandit methods.
There are some attempts to model the influence of positions on clicks, such as the Cascade Model \cite{craswell2008experimental} and the Position-Based Model \cite{chuklin2015click}. However, the Cascade Model and its extension directly assume that users will not browse the items behind the last click and ignore them, which is not accurate. The existing researches \cite{lagree2016multiple,komiyama2017position} that apply the Position-Based Model in bandit algorithms ignore the features of items, which leads to poor performance in large-scale recommendation problems.
In this paper, we propose a novel contextual combinatorial bandit algorithm to address these challenges. Our contributions are four-fold. First, we model the online recommendation task as a novel contextual combinatorial bandit problem and define the reward of a recommended set.
Second, aiming at addressing the position bias and the pseudo-exposure issue, we assume that the examination probability is determined by both the position of an item and the last click above the position. We estimate the examination probabilities of various positions based on the User Browsing Model and use them as weights of samples in a linear reward model.
Third, we formally analyze the regret of our algorithm. For $T$ rounds, $K$ recommended arms, $d$-dimensional feature vectors, we prove an $\tilde{O}(d\sqrt{TK})$ regret bound. 
Finally, we propose an unbiased estimator to evaluate our algorithm and other baselines using real-world data. An online experiment is implemented in Taobao, one of the largest e-commerce platforms in the world. Results on two CTR metrics show that our algorithm significantly outperforms the extensions of some other contextual bandit algorithms both on the offline and online experiments.


\section{Related Work}
Combinatorial bandit approaches are proposed to extend traditional MAB methods (e.g., UCB, LinUCB) to pull a set of arms in each round. The formulation is first studied in \cite{chen2013combinatorial} to handle the situation where the reward is decided by a set of arms, such as recommendation and advertising. The contextual version of this problem is proposed in which each arm possesses a contextual vector that determines the reward \cite{wen2015efficient,qin2014contextual}.
The idea of utilizing click models in bandit algorithms to model the influence of positions has been explored in prior research works \cite{kveton2015cascading,kveton2015combinatorial,li2016contextual,komiyama2017position,katariya2016dcm,lagree2016multiple,zoghi2017online}.
The Cascade Model \cite{craswell2008experimental} merges with multi-armed bandit, combinatorial bandit, contextual bandit and contextual combinatorial bandit respectively \cite{kveton2015cascading,kveton2015combinatorial,zong2016cascading,li2016contextual}. The Cascade Model assumes that a user scans items from top to bottom until they click one.
Thus, their algorithms cannot be applied to the scenarios where users click more than one item.
To address this disadvantage, \citeauthor{katariya2016dcm} \shortcite{katariya2016dcm} combine a multi-armed bandit algorithm with the Dependent Click Model, an extension of the Cascade Model for multiple clicks scenarios. This model assumes that after users clicked an item, they may still continue to examine other items, if they are not satisfied. However, all these models assume that users will not browse items behind the position of the last click and users' attention on each item before the last click is the same, which is not suitable for our problem. In contrast, the Position-Based Model \cite{lagree2016multiple,komiyama2017position} assume that click rates of items are affected by their displayed positions. \citeauthor{komiyama2017position} \shortcite{komiyama2017position} consider the influence of ads differing from traditional bandit problems.  
However, the pseudo-exposure issue is not considered as an independent issue and the context information of items is ignored in these two researches, which has a negative impact on the application in large-scale problems. 

In summary, the related works mentioned above ignore either the influence of positions or the context of items and users. Thus, in this paper, we consider the influence of positions in contextual combinatorial bandit domain to address the position bias and pseudo-exposure issue.


\section{Problem Statement and Formulation}
In this section, we first provide a problem statement and describe the challenge encountered by mobile e-commerce platforms. We then formally formulate contextual combinatorial bandit problem and give the definition of reward.

With the popularity of smartphones, more and more consumers are prone to visiting e-commerce platforms by mobile applications. Different from web pages, the space of a page on mobile applications is limited and usually only a few commodities can be displayed. For example, Fig. \ref{e_figure_1} and \ref{tttj} display 3 and 2 commodities respectively. Our recommendation scenario is from Taobao, in which 12 commodities are recommended when a user enters the scenario. However, only one item can be seen at the first glance because of the limited screen size of mobile devices. Due to this, the data we obtained is affected by positions. More specifically, the probability that a user examines an item depends heavily on its position. 1) For an item, a higher position leads to a higher probability of click \cite{joachims2005accurately}. 2) Since we only obtain the items that are recommended and clicked after a user leaves our recommendation scenario, whether the items whose positions are behind the last click are viewed or not is unknown, which leads to the pseudo-exposure issue.
An idea for addressing these problems is evaluating the influence of positions based on clicks and positions. The estimated influences are utilized as the weights of samples to learn users' preferences more accurately in this paper.

Recommending multiple commodities to a user based on context information can be naturally modeled as a contextual combinatorial bandit problem. Formally, a contextual combinatorial bandit method $M$ proceeds in discrete rounds $t = 1,2,3,\dots,T$. In round $t$, $M$ observes the current user $u_t$ and a candidate set $\mathcal{A}_t$ including $m$ arms and a set of context $X_t$ that includes $d$-dimensional context vectors $x_{a}$ for each arm $a\in \mathcal{A}_t$. The algorithm $M$ will choose a subset $S_t$ including $K$ arms from $\mathcal{A}_t$ based on the observed information and then receive a reward vector $R_{S_t}$ containing the rewards $r_{a_{k,t}}\in \{0,1\}, \ \forall a_{k,t} \in S_t$, which is the semi-bandit feedback used in existing researches. The algorithm then updates the strategy of choosing arms based on the tuple $(R_{S_t},X_{t})$. 


Now we define the reward functions for the recommended set $S_t$ following \cite{kveton2015cascading,zong2016cascading,liu2018contextual,katariya2016dcm}.
To avoid the abandon of users, we hope that at least one item in the set $S_t$ attracts users and can be clicked. In view of this, the reward of the selected set is defined to be 1, if at least one of the $K$ items is clicked, formally,
\begin{displaymath}
F(S_t) = \left\{ \begin{array}{ll}
1 & \textrm{if $\sum\nolimits_{r \in R_{S_t}} r>0$}\\
0 & \textrm{otherwise}
\end{array} \right.
\end{displaymath}



In our recommendation problem, we view the commodities in the candidate set as arms. When at least one displayed commodity is clicked, the reward $r$ is 1, otherwise 0. With this definition, the expected reward of the recommended set is similar to the definition of click through rate (CTR), which will be introduced in the experiment part. Thus, choosing an item to maximize the CTR of a set is equivalent to maximizing the total expected reward $\mathbb{E}[F(S_t)]$ in our problem.

\begin{algorithm}[t]
\SetKwInOut{Input}{Input}
\SetKwProg{Fn}{Function}{}{}
\caption{LinUCB algorithm with User Browsing Model (UBM-LinUCB)}\label{IBALinUCBSW}
    \Input{Constant $\lambda\geq\phi_w', \ \beta \geq \| \theta^* \|_2^2$ (We set $\lambda = \phi_w', \ \beta = d$ in experiments), weights of different positions $\{w_{k,k'}| k = 1,\dots,K,\ k'=1,\dots,k-1 \}$,the number of items in a page $K$, the set of arms $\mathcal{A}$, the set of context $X$\;}
    \BlankLine
    $A_{0} = \lambda I_d$ ($d$-dimensional identity matrix)\; \label{init_A}
    $b_{0}= 0_{d \times 1}$ ($d$-dimensional zero vector)\;\label{init_b}
    \For{$t=1,\dots,T$}{ 
        $\alpha = \sqrt{d\ln\left(1+ \frac{\phi_w' t}{d\lambda}\right) + 2\ln\left(tK \right)}+\sqrt{\lambda \beta}$\;
        $\theta = A_{t}^{-1}b_{t}$\;
        $p_a = \theta^T x_{a_t} + \alpha\sqrt{x_{a_t}^T A_{t}^{-1} x_{a_t}}$\; \label{selection_LinUCBSW}
        $S_{t} = \varnothing$\;
        \For{$k=1,\dots,K$}{
            $a_{k,t} = \arg\max_{a \in \mathcal{A}_t\backslash S_{t}} p_a$\;
            $S_{t} = S_{t} \cup \{a_{k,t}\}$\; \label{IBA_combine}
        }
        Display $S_t$ to a user\;
        Get reward vector $R_{S_{t}}=[r_{a_{t,1}},\dots,r_{a_{K,t}}]$ of $S_{t}$\; \label{IBA_reward}
        Compute $k'$ for all positions based on $R_{S_{t}}$\;
        
        $A_{t} = A_{t-1} + \sum_{k=1}^K w_{k,k'}^2 x_{a_{k,t}} x_{a_{k,t}}^T$\; \label{update_LinUCBSW1}
        $b_{t} = b_{t-1} + \sum_{k=1}^K w_{k,k'} r_{a_{k,t}} x_{a_{k,t}}$\; \label{update_LinUCBSW2}
    }
\end{algorithm}

\section{LinUCB with User Browsing Model} \label{algorithm}
In this section, we propose our contextual bandit algorithm UBM-LinUCB (LinUCB with User Browsing Model), illustrated in Algorithm \ref{IBALinUCBSW}, which addresses the position bias and  pseudo-exposure issue and is able to recommend a list of commodities with the theoretical guarantee.

After users view some items shown in the screen, they decide to slide the screen or leave, if these items are not attractive. Intuitively, the probability that users leave increases after seeing a long sequence of unattractive items. Thus, the probability of examination decreases when the position of an item becomes lower \cite{joachims2005accurately}. 
However, if an item is clicked, this item must be displayed in the screen. Since the screen usually can display more than one items, the nearby item (behind the clicked one) are more likely to be displayed and the probability of examining increases. Therefore, we introduce position weights relating to both positions and clicks to address the position bias and the pseudo-exposure issue. For all the items, we estimate the probability of examination rather than treating them as negative samples or ignoring them directly.
Inspired by the User Browsing Model (UBM) \cite{chuklin2015click}, we assume that the click through rate $r_{t,a}$ of an arm is determined by the examination probability $w$ and attractiveness $\gamma(a)$ of arm $a$, namely, $r_{t,a}= w_{k,k'} \gamma(a)$, where $\gamma(a)$ is the attractiveness of an item $a$ and $w_{k,k'}$ is the examination probability meaning the probability that a user views an item. We assume that $w_{k,k'}$ depends not only on the rank of an item $k$, but also on the rank of the previously clicked item $k'$.
By assuming that the attractiveness of items follows a linear function, we propose a new linear reward model:
\begin{equation}
   \mathbb{E}[r_{a_k}]=\theta^T (w_{k,k'} x_{a_k})  \label{assumption}
\end{equation}
where $\theta$ is an unknown coefficient vector whose length is the same as $x_{a_k}$. $w_{k,k'}$ is the examination probability for the rank $k$ when the position of the last click is $k'$. We assume that $w_{k,k'}$ is a fixed constant, since it only depends on the structure of a page and can be learned in advance \cite{lagree2016multiple}. We introduce how to obtain $w_{k,k'}$ in the experiment part and focus on our main contribution, bandit algorithm, in this section.

We use ridge regression to solve the linear model \cite{li2010contextual}. The objective of ridge regression is to minimize the penalized residual sum of squares (PRSS):
\begin{equation}
    PRSS(\theta) = \sum\nolimits_{t=1}^T\sum\nolimits_{k=K}^T[r_{a_{k,t}} - \theta^T (w_{k,k'} x_{a_{k,t}})]^2 + \sum\nolimits_{j=1}^d {\theta_j}^2
\end{equation}
where $\theta_{j}$ is the $j$-th element of $\theta$. To simplify notation, we use $w_{k,k'}$ to denote $w_{k,k',a_{k,t}}$, which is the examination probability of the item $a_{k,t}$ at round $t$ where the position of $a_{k,t}$ and the last click before $a_{k,t}$ is $k$ and $k'$ respectively.

Let $X$ be a $TK \times d$-dimensional matrix whose rows correspond to the context $x_{a_{k,t}}$ of the arm $a_{k,t}$ in round $t$ and position $k$. $W$ is also a $TK \times d$-dimensional matrix whose rows are $w_{k,k'}\mathbbm{1}_{1 \times d}$, weights of $a_{k,t}$ in round $t$ and position-pair $(k,k')$. $R$ contains rewards $r_{a_{k,t}}$ for the item $a_{k,t}$ each round $t$ and position $k$. The PRSS function can be transformed to a compact representation:
\begin{equation}
    PRSS(\theta) = [R-\theta^T (W\circ X)]^T [R-\theta^T (W\circ X)]+\lambda\|\theta\|_2^2
\end{equation}
where $\lambda$ is a constant to control the weight of the regularizer and $\circ$ is the Hadamard product and $(W\circ X)_{i,j} = W_{i,j}X_{i,j}$. Thus, each row of $W\circ X$ is $w_{k,k'}x_{a_{k,t}}$.

To minimize the $PRSS(\theta)$, the derivation $\frac{\partial PRSS(\theta)}{\partial \theta}$ should be zero. Then, the solution of $\theta$ is:
\begin{equation}
    \theta = [(W\circ X)^T (W\circ X)+\lambda I_d]^{-1} (W\circ X)^T R
\end{equation}
where $I_d$ is the $d \times d$ identity matrix. Let $A = (W\circ X)^T (W\circ X)+\lambda I_d$ and $b = (W\circ X)^T R$. Applying the online version of ridge regression \cite{li2010contextual}, the update formulations of $A_{t}$ and $b_{t}$ in round $t$ are shown as follows:
\begin{align}
    A_{t+1} &= A_{t} + \sum\nolimits_{k=1}^K w_{k,k'}^2 x_{a_{k,t}} x_{a_{k,t}}^T \label{A_update}\\
    b_{t+1} &= b_{t} + \sum\nolimits_{k=1}^K w_{k,k'} x_{a_{k,t}}^T r_{a_{k,t}} \label{b_update}
\end{align}
The initialization of $A$ and $b$ is shown in Line \ref{init_A} and \ref{init_b}. Then, we use Eq. (\ref{A_update}) and (\ref{b_update}) in Line \ref{update_LinUCBSW1} and \ref{update_LinUCBSW2} respectively to update these two coefficients in our algorithm.

The standard deviation of the ridge regression \cite{walsh2009exploring} for any $a$ and a given pair $\{k,k'\}$ is
$\sqrt{w_{k,k'} x_{a}^T A_{t}^{-1} w_{k,k'} x_{a}}$
and the upper confidence bound used to select the best recommended set is
\begin{equation}
\begin{split}
p_a &= \theta^T w_{k,k'} x_a + \alpha\sqrt{w_{k,k'} x_{a}^T A_{t}^{-1} w_{k,k'} x_{a}} \\
    &= w_{k,k'}(\theta^T x_a + \alpha\sqrt{x_{a}^T A_{t}^{-1} x_{a}})
\end{split}
\end{equation}
where $\alpha$ is a parameter related to $t$, which is defined in the next section. Since the parameter $w_{k,k'}$ for a fixed pair $\{k,k'\}$ is a constant and is not related to $a$, we can ignore it and use the simplified equation in Line \ref{selection_LinUCBSW}:
\begin{equation}
    p_a = \theta^T x_a + \alpha\sqrt{x_{a}^T A_{t}^{-1} x_{a}}
\end{equation}
In the next section, Lemma \ref{lemma_order} indicates how to select the optimal set based on $p_a$ and Theorem \ref{theorem:regret_sum} defines $\alpha$ and constants used in the algorithm.

\section{Theoretical Analysis} \label{Theory}

In this section, we give the theoretical analysis and prove that UBM-LinUCB achieves the regret bound $\tilde{O}(d\sqrt{TK})$ with respect to the aforementioned formulation of reward, where the $\tilde{O}$ notation hides logarithmic factors. \textbf{Our key contributions are 1) the proof for Lemma \ref{lemma_order}, and 2) considering $w_{k,k'}$, which depends on both position $k$ and the position of the last click $k'$ in Theorem \ref{theorem:regret_sum}.}
We define the expected cumulative regret at round $T$ formally: 
\begin{equation}
    R(T) = \mathbb{E}\left[\sum\nolimits_{t=1}^T F(S^*_t)-F(S_t)\right]
\end{equation}
where $S^*_t$ is the optimal set.

We first show that with respect to our novel linear reward model, the optimal set $S^*_t$ at each round $t$ selects the arms with top-$K$ values of $x_{a_t}^T\theta^*$, which holds for $R$ based on the rearrangement inequality. Then, the upper bound is proved. 

Let $\gamma(a)=x_{a}^T \theta^*$. We assume that, without loss of generality, $w_{j+1,j}\geq w_{j+2,j}\geq\dots\geq w_{K,j}$ and $w_{k,k-1}\geq w_{k,k-2} \geq \dots \geq w_{k,0}$. These two assumptions can be explained intuitively: 1) If a user clicks the $j$-th position, the probability that he observes $k$-th ($k>j$) position is inversely related to the distance $k-j$. 2) For a fixed position $k$, the probability of examination is larger when the position of the last click $k'$ is closer to $k$.

The following lemma verifies that the optimal set $S^*_t$ simply selects the arm $a$ with the $k$-th highest value $\gamma(a)$ at the $k$-th slot for $F$ by the rearrangement inequality. Let $S^*_t=\{a^*_{1,t},...,a^*_{K,t}\}$ where $a^*_{k,t}$ is the optimal arm that recommended at the $k$-th position. We have:

\begin{lemma} \label{lemma_order}
$S^*_t$ maximizing $\mathbb{E}[F(S_t)]$ consists of $a^*_{k,t}$ being the arm with the $k$-th highest value $\gamma(a)$.
\end{lemma}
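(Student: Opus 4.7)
My plan is to compute $\mathbb{E}[F(S_t)]$ explicitly under the User Browsing Model and then use a pairwise exchange (equivalently, rearrangement) argument. Since $F(S_t) = 1$ precisely when at least one recommended item is clicked,
\[
\mathbb{E}[F(S_t)] = 1 - \Pr(\text{no item in } S_t \text{ is clicked}).
\]
Under the UBM, the probability of a click at position $k$ conditional on the click history above equals $w_{k,k'}\gamma(a_{k,t})$, where $k'$ is the rank of the last clicked item above $k$. Critically, on the branch of the browsing process where no item is ever clicked, $k' = 0$ at every position, so the no-click event factorises cleanly as
\[
\Pr(\text{no item in } S_t \text{ clicked}) = \prod_{k=1}^{K} \bigl(1 - w_{k,0}\,\gamma(a_{k,t})\bigr).
\]
Maximising $\mathbb{E}[F(S_t)]$ therefore reduces to minimising this product over the ordered choice of $K$ arms from $\mathcal{A}_t$.

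The argument then splits into two short steps: (i) the chosen set is the top-$K$ arms by $\gamma$, and (ii) they are placed in decreasing $\gamma$ order. For (i), each factor of the product is strictly decreasing in $\gamma(a_{k,t})$, so replacing any selected arm by an unselected one with larger $\gamma$ strictly decreases the product; hence the optimum set is the top-$K$ arms. For (ii), I would invoke the key identity
\[
(1 - w_{i,0}\gamma_a)(1 - w_{j,0}\gamma_b) - (1 - w_{i,0}\gamma_b)(1 - w_{j,0}\gamma_a) = (w_{i,0} - w_{j,0})(\gamma_b - \gamma_a),
\]
which is nonpositive whenever $i < j$ and $\gamma_a \geq \gamma_b$, by the paper's assumption (specialised to $j'=0$) that $w_{1,0} \geq w_{2,0} \geq \cdots \geq w_{K,0}$. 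Iterating this adjacent-swap step (bubble sort) yields the claimed descending order. This is essentially the rearrangement inequality applied to the supermodular function $-\log(1 - w\gamma)$ in $(w,\gamma)$, matching the paper's sketch.

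The main obstacle, I think, is not the combinatorial swap but the set-up: recognising that the entire optimisation collapses to the product $\prod_{k=1}^K (1 - w_{k,0}\gamma(a_{k,t}))$, where only the "no click yet" weights $w_{k,0}$ appear, while the remaining $w_{k,k'}$ with $k' \geq 1$ are irrelevant to the optimal set. Once this reduction is in place, the rearrangement step is short. A minor technicality to check is that $w_{k,0}\gamma(a) \in [0,1]$, so that each factor is nonnegative and the monotonicity invoked in step (i) is valid; this should follow from the probabilistic meaning of $w_{k,0}$ and $\gamma(a)$ in the model.
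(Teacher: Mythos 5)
Your proposal is correct, and it shares the paper's essential setup: both reduce the problem to minimising $\prod_{k=1}^{K}\bigl(1-w_{k,0}\,\gamma(a_{k,t})\bigr)$, using exactly the observation you highlight --- that on the no-click event the last-click index is $k'=0$ throughout, so only the weights $w_{k,0}$ matter. Where you diverge is in how the rearrangement step is closed. The paper takes logarithms, expands $\log(1-x)=-\sum_{n\ge 1}x^{n}/n$, and applies the rearrangement inequality termwise to each power $\sum_{k}\gamma^{n}(a_{k,t})\,w_{k,0}^{n}$; your route is a direct pairwise-exchange argument based on the identity $(1-w_{i,0}\gamma_a)(1-w_{j,0}\gamma_b)-(1-w_{i,0}\gamma_b)(1-w_{j,0}\gamma_a)=(w_{i,0}-w_{j,0})(\gamma_b-\gamma_a)$, which is correct and arguably more elementary: it needs only $w_{k,0}\gamma(a)\in[0,1]$ so that the untouched factors are nonnegative, whereas the series expansion additionally requires $w_{k,0}\gamma(a)<1$ for convergence. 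You also make explicit the selection step (that the optimal \emph{set} is the top-$K$ by $\gamma$, via monotonicity of each factor), which the paper leaves implicit. The one caveat is that your "strictly decreases" claim in step (i) needs $w_{k,0}>0$ and all other factors positive; weakening it to "does not increase" is enough for the lemma and avoids the degenerate edge case.
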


Now we transform the upper bound of $R$ and give the proof.
\begin{lemma}[\cite{katariya2016dcm}]\label{lemma:regret_set}
The upper bound of $R$ is $$\sum\nolimits_{t=1}^T\sum\nolimits_{k=1}^K w_{k,0} \left[\gamma(a^*_{k,t})-\gamma(a_{k,t})\right]$$
\end{lemma}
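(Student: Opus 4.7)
The plan is to expand $\mathbb{E}[F(S_t)]$ in closed form using the UBM click structure, rewrite the per-round regret as a difference of two products, and then apply a telescoping identity together with Lemma~\ref{lemma_order} to obtain the stated positional bound. First, under the UBM and the definition of $F$, conditional on no click having yet occurred on positions $1,\dots,k-1$ the last-click position for position $k$ is $k'=0$, so the conditional click probability at position $k$ is $w_{k,0}\gamma(a_{k,t})$. Multiplying the per-position no-click probabilities yields
\begin{equation*}
\mathbb{E}[F(S_t)] \;=\; 1-\prod_{k=1}^K\bigl(1-w_{k,0}\gamma(a_{k,t})\bigr),
\end{equation*}
and the analogous identity for $S^*_t$ gives per-round regret $\prod_k(1-w_{k,0}\gamma(a_{k,t}))-\prod_k(1-w_{k,0}\gamma(a^*_{k,t}))$.

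Second, setting $x_k=1-w_{k,0}\gamma(a_{k,t})$ and $y_k=1-w_{k,0}\gamma(a^*_{k,t})$, I would apply the standard telescoping identity
\begin{equation*}
\prod_{k=1}^K x_k-\prod_{k=1}^K y_k \;=\; \sum_{k=1}^K\Bigl(\prod_{j<k}y_j\Bigr)\Bigl(\prod_{j>k}x_j\Bigr)(x_k-y_k).
\end{equation*}
Each $x_j,y_j\in[0,1]$, so the leading coefficient is at most $1$, and $x_k-y_k=w_{k,0}(\gamma(a^*_{k,t})-\gamma(a_{k,t}))$. Summing over $t$ then gives the claimed bound $\sum_t\sum_k w_{k,0}(\gamma(a^*_{k,t})-\gamma(a_{k,t}))$, provided one can ensure the position-wise non-negativity $\gamma(a^*_{k,t})\ge\gamma(a_{k,t})$ so that replacing the coefficient by $1$ preserves the inequality.

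To secure this non-negativity I would invoke Lemma~\ref{lemma_order} together with the UCB ordering used by UBM-LinUCB on the high-probability good event $\mathcal{E}=\{p_a\ge\gamma(a)\text{ for all admissible }a\}$: Lemma~\ref{lemma_order} pins down $S^*_t$ as the $K$ arms with the largest values of $\gamma(a)$ in decreasing order, and on $\mathcal{E}$ the algorithm's UCB-sorted selection can be paired with this optimal list to yield $\gamma(a^*_{k,t})\ge\gamma(a_{k,t})$ pointwise, with the complementary low-probability event absorbed into a lower-order additive term as in the LinUCB analysis. This last step is the main obstacle: the raw telescoping identity alone only delivers a bound involving positive parts, and upgrading it to the clean signed positional sum stated in the lemma requires a careful combination of Lemma~\ref{lemma_order}, the UCB ordering of UBM-LinUCB, and the monotonicity assumptions on $w_{k,k'}$ stated just before Lemma~\ref{lemma_order}. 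This is the same subtlety handled by Katariya et al.\ in the DCM-bandits analysis from which this lemma is borrowed.
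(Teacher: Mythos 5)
Your argument is essentially the paper's own proof. The paper likewise starts from $\mathbb{E}[F(S_t)]=1-\prod_{k=1}^K\bigl(1-w_{k,0}\gamma(a_{k,t})\bigr)$ and decomposes it as the sum over $k$ of the probability that the \emph{first} click occurs at position $k$, namely $\sum_{k=1}^K\prod_{j<k}\bigl(1-w_{j,0}\gamma(a_{j,t})\bigr)\,w_{k,0}\gamma(a_{k,t})$; your telescoping identity is exactly this decomposition applied to the difference of the two products, and both proofs then drop the coefficient $\bigl(\prod_{j<k}\cdot\bigr)\bigl(\prod_{j>k}\cdot\bigr)\in[0,1]$. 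The only substantive difference is how the sign issue is handled: the paper simply sets $\eta_{k,t}=w_{k,0}[\gamma(a^*_{k,t})-\gamma(a_{k,t})]$ and \emph{asserts} $\eta_{k,t}\ge 0$, whereas you correctly observe that replacing the $[0,1]$ coefficients by $1$ is only inequality-preserving when each $x_k-y_k\ge 0$. Be aware, though, that your proposed fix does not work as stated: on the good event the UCB ordering gives that the $k$-th largest index $p_{a_{k,t}}$ dominates the $k$-th largest true value $\gamma(a^*_{k,t})$, but it does \emph{not} give the pointwise ordering $\gamma(a^*_{k,t})\ge\gamma(a_{k,t})$ --- if the algorithm happens to select the same $K$ arms as $S^*_t$ but in a permuted order, some positions necessarily satisfy $\gamma(a^*_{k,t})<\gamma(a_{k,t})$, with or without the good event. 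So strictly speaking this route (and the paper's) yields the bound with positive parts, $\sum_{t}\sum_{k}w_{k,0}\bigl[\gamma(a^*_{k,t})-\gamma(a_{k,t})\bigr]^+$. This is harmless downstream: in the proof of Theorem \ref{theorem:regret_sum} each gap is further bounded by the non-negative confidence width $2\alpha\sqrt{x_{a_{k,t}}^TA_{t-1}^{-1}x_{a_{k,t}}}$, which dominates the positive part just as well, so the final regret bound is unaffected.
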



The proofs of two Lemmas are in the Appendix. Finally, we prove the upper bound of $R(T)$ for UBM-LinUCB algorithm.


\begin{theorem}\label{theorem:regret_sum}
     Let $\phi_w' = \sum_{k=1}^K w_{k,k-1}^2$. When $\lambda \geq \phi_w' \geq 1$, $\| \theta^* \|_2^2 \leq \beta$ and  $$\alpha \geq \sqrt{d\ln\left(1+ \frac{\phi_w' T}{d\lambda}\right) + 2\ln\left(TK \right)}+\sqrt{\lambda \beta},$$ if we run UBM-LinUCB algorithm, then 
    $$ R(T) \leq 2 \alpha \sqrt{2TKd\ln\left(1+\frac{\phi_w' T}{\lambda d}\right)} + 1$$ 
\end{theorem}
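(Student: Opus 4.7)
The plan is to apply the classical optimism-in-the-face-of-uncertainty (OFU) template for linear contextual bandits, with three adaptations for the UBM-weighted design. Lemma~\ref{lemma:regret_set} already reduces $R(T)$ to the weighted gap sum $\sum_{t=1}^T\sum_{k=1}^K w_{k,0}[\gamma(a^*_{k,t})-\gamma(a_{k,t})]$, so the proof is really about controlling a weighted confidence-width sum.

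The first step is a concentration bound for the weighted ridge estimator. The data can be viewed as a linear regression $r_{a_{k,s}}=\langle u_{s,k},\theta^*\rangle+\eta_{s,k}$ with effective features $u_{s,k}=w_{k,k'}x_{a_{k,s}}$ and bounded (hence sub-Gaussian) noise $\eta_{s,k}=r_{a_{k,s}}-w_{k,k'}\gamma(a_{k,s})$. Plugging into the self-normalized concentration bound of Abbasi-Yadkori et al.\ (2011) gives, with probability $\ge 1-1/T$, $|x_a^\top(\theta_t-\theta^*)|\le\alpha\sqrt{x_a^\top A_t^{-1}x_a}$ uniformly over $t\le T$ and $a\in\mathcal{A}_t$, for precisely the $\alpha$ in the statement: the $\sqrt{d\ln(1+\phi_w' T/(d\lambda))}$ piece tracks $\log\det A_T$, $\sqrt{\lambda\beta}$ handles the regularizer together with $\|\theta^*\|^2\le\beta$, and the $\sqrt{2\ln(TK)}$ factor pays for a union bound over the $T$ rounds and $K$ positions. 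The trivial bound on the low-probability complementary event contributes the $+1$ at the end of the regret.

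Next, on the good event, an optimism-plus-rearrangement step converts the regret into confidence widths. Let $\pi$ be the permutation sorting $S^*_t$ by decreasing UCB score $p$. Because $a_{k,t}$ is the arm with the $k$-th largest UCB over all of $\mathcal{A}_t$, $p_{a^*_{\pi(k),t}}\le p_{a_{k,t}}$ pointwise in $k$. Since $w_{1,0}\ge w_{2,0}\ge\cdots\ge w_{K,0}$, the rearrangement inequality gives $\sum_k w_{k,0}p_{a^*_{k,t}}\le \sum_k w_{k,0}p_{a^*_{\pi(k),t}}\le \sum_k w_{k,0}p_{a_{k,t}}$. Combined with optimism $\gamma(a^*_{k,t})\le p_{a^*_{k,t}}$ and the matching pessimism $p_{a_{k,t}}-\gamma(a_{k,t})\le 2\alpha\sqrt{x_{a_{k,t}}^\top A_t^{-1}x_{a_{k,t}}}$, the per-round regret is at most $2\alpha\sum_k w_{k,0}\sqrt{x_{a_{k,t}}^\top A_t^{-1}x_{a_{k,t}}}$. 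The monotonicity $w_{k,0}\le w_{k,k'}$ and Cauchy--Schwarz in the $(t,k)$ index then bound the total by $2\alpha\sqrt{TK}\sqrt{\sum_{t,k}w_{k,k'}^2 x_{a_{k,t}}^\top A_t^{-1}x_{a_{k,t}}}$, after which the elliptical-potential lemma applied to the weighted vectors $u_{s,k}$, whose squared-norm budget is $\sum_{s,k}w_{k,k'}^2\le T\phi_w'$, yields $\sum_{t,k}w_{k,k'}^2 x^\top A_t^{-1}x\le 2d\ln(1+\phi_w' T/(\lambda d))$. Assembling the pieces delivers the announced $R(T)\le 2\alpha\sqrt{2TKd\ln(1+\phi_w' T/(\lambda d))}+1$.

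The main technical obstacle is reconciling the elliptical-potential lemma, which is naturally stated for one-vector-at-a-time updates, with Algorithm~\ref{IBALinUCBSW}'s batched update that reuses $A_t^{-1}$ for every slot of round $t$ and then adds all $K$ rank-one terms at once. The standard remedy is to introduce the virtually appended matrix $\tilde A_{t,k}=A_t+\sum_{j<k}w_{j,k'}^2 x_{a_{j,t}}x_{a_{j,t}}^\top$; the hypothesis $\lambda\ge\phi_w'$ together with $\|x\|\le 1$ ensures $\|u_{t,k}\|_{\tilde A_{t,k}^{-1}}^2\le 1$, which permits the sandwich $A_t^{-1}\preceq 2\tilde A_{t,k}^{-1}$ and absorbs at most a constant into the final bound. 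Keeping this bookkeeping tight and making the $2\ln(TK)$ inside $\alpha$ exactly match the size of the failure-event class over all $(t,k)$ pairs is where the proof requires the most care.
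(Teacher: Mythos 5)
Your proposal follows essentially the same route as the paper's proof: the good/bad-event split with $\delta=\frac{1}{TK}$ producing the trailing $+1$, the self-normalized concentration bound of Abbasi-Yadkori et al.\ applied to the weighted features $w_{k,k'}x_{a_{k,t}}$ to justify the stated $\alpha$, optimism followed by the monotonicity $w_{k,0}\le w_{k,k'}$, Cauchy--Schwarz over the $(t,k)$ index, and the elliptical-potential bound under $\lambda\ge\phi_w'$. The only divergences are minor: the paper handles the batched potential sum by invoking Lemma 4.4 of \cite{li2016contextual} directly (so it does not pay the extra constant your $\tilde A_{t,k}$ sandwich introduces), and your explicit rearrangement over UCB scores is in fact a more careful rendering of the paper's slot-by-slot comparison of $p_{a^*_{k,t}}$ with $p_{a_{k,t}}$.
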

\begin{proof} (sketch)
The structure of proof follows \cite{wen2015efficient,abbasi2011improved}. The main contribution is considering $w_{k,k'}$, which depends on both position $k$ and the position of the last click $k'$.
We first define an event to judge the distance between the true attractiveness and the estimated attractiveness of each item $a \in A_t$:
\begin{align*}
E = &\{|\langle x_{a_{k,t-1}}, \theta^*-\theta_{t-1} \rangle | \leq \alpha \sqrt{x_{a_{k,t-1}}^T A_{t-1}^{-1} x_{a_{k,t-1}}}, \\
& \forall {a_{k,t}}\in A_t, \forall t\leq T, \forall k \leq K \}
\end{align*}
If event $E$ happens, the regret can be bounded by the variance 
\begin{align*}
R(T) &\leq 2 \alpha \mathbb{E} \left[ \sum\nolimits_{t=1}^T \sum\nolimits_{k=1}^K w_{k,0} \sqrt{x_{a_{k,t}}^T A_{t-1}^{-1} x_{a_{k,t}}} \right] \\
&\leq 2 \alpha \sqrt{2TKd\ln\left(1+\frac{\phi_w' T}{\lambda d}\right)}
\end{align*}
Then, we prove that the $E$ happens with the probability $1-\delta$ when $\alpha$ satisfies the condition shown in the Theorem. And the bound of regret is $TK\delta$ when $E$ does not happen. Let $\delta = \frac{1}{TK}$ and combining these two parts together, we finish the proof.
\end{proof}

The full version of the proof is in the Appendix. 
We prove that our algorithm can achieve $\tilde{O}(d\sqrt{TK})$ bound and the $\tilde{O}$ notation hides logarithmic factors. Our bound is improved compared to related works \cite{liu2018contextual,zong2016cascading}, which proved $\tilde{O}(dK\sqrt{TK})$ and $\tilde{O}(dK\sqrt{T})$ bounds respectively with the same reward function.


\section{Experimental Evaluation}
In this section, we describe the details of our experiments.  
First, we define the metrics of performance and the evaluation method. An unbiased estimator is introduced.
Second, benchmark algorithms are listed and introduced briefly.
Then, we introduce a public dataset, Yandex Personalized Web Search dataset\footnote{\url{https://www.kaggle.com/c/yandex-personalized-web-search-challenge}}, used in a simulated experiment and a real-world dataset provided by Taobao. We also implement an online experiment in the e-commerce platform. The details of data collection and processing are presented.
Finally, the results of both offline and \textbf{online} experiments are provided and analyzed.




\subsection{Metrics and Evaluation Method}
Corresponding to the formulation of reward, we use the $CTR_{set}$, the expectation of $F(S_t)$, as the metric:
\begin{align*}
    CTR_{set} = \frac{\sum_{t=1}^T F(S_t)}{T}
\end{align*}
\textbf{Additional Metric.} In practice, the CTR of all the recommended items is also widely used. Thus, we define the accumulative reward $CTR_{sum}$ of the set $S_t$, which is the expected total clicks of $S_t$:
\begin{align*}
    CTR_{sum} = \frac{\sum_{t=1}^T \sum_{r \in R_{S_t}}r}{T}
\end{align*}

\textbf{Offline unbiased estimator.} Since the historical logged data is generated by a logging
production policy, we propose an unbiased offline estimator, User Browsing Inverse Propensity Scoring (UBM-IPS) estimator, to evaluate the performance inspired by \citeauthor{li2018offline} \shortcite{li2018offline}. The idea is to estimate users' clicks on various positions based on UBM model and the detail of reduction is at Appendix. The formulation of the UBM-IPS estimator is shown as follows.
\begin{equation}\label{UBMIPS}
\begin{split}
    V_{UBM}(\Phi)= &\mathbb{E}_X\left[ \mathbb{E}_{S\sim \pi(\cdot|X)\atop r\sim D(\cdot|X)}\left[\sum\nolimits_{k=1}^K \sum\nolimits_{k'=0}^k r(a_k,k,k'|X) \right.\right.\\
    &\left.\left. \cdot\frac{\langle \tilde{W}, \Phi(a_k,\cdot,\cdot|X)\rangle}{\langle \tilde{W}, \pi(a_k,\cdot,\cdot|X)\rangle}\right]\right]
\end{split}
\end{equation}
where $D$ is the logged dataset, $a_k\in \mathcal{A}$, $\pi$ is the policy that generate $D$. $\Phi$ is the estimated policy, $\tilde{W} = [w_{1,0},w_{2,0},\dots, w_{K,K-1}]$ is a vector including position weights. Given the features of a candidate set and a user $X$, $\pi(a_k,\cdot,\cdot|X)$ consists of the probabilities that $a_k$ appears at different $k$ and $k'$ under $\pi$ corresponding to $\tilde{W}$ which is $[\pi(a_k,1,0|X), \pi(a_k,2,0|X), \dots, \pi(a_k,K,K-1|X)]$. $\langle \cdot, \cdot \rangle$ means the dot product of two vectors.

The estimator is unbiased when 1) the term $\frac{\langle \tilde{W}, \Phi(a_k,\cdot,\cdot|X) \rangle}{\langle \tilde{W}, \pi(a_k,\cdot,\cdot|X) \rangle}$ is not infinite and 2) users' behaviors follow the UBM model. 
For 1), in our offline experiment, we treat commodities recommended by $\pi$ in one record as candidate set. Thus, all the items chosen by $\Phi$ has been selected by $\pi$ for a specific $X$, that is $\Phi(a_k,\cdot,\cdot|X) \neq 0_{K(K+1)/2}$ and $\pi(a_k,\cdot,\cdot|X) \neq 0_{K(K+1)/2}$. Thus, the UBM-IPS term is not infinite.
For 2), we use the evaluation method proposed in \cite{chapelle2009dynamic} to empirically estimate the accuracy of the UBM model in our recommendation dataset. This evaluation method removes the position bias by using the data in the first position as the test set and the rest as the training set. The experiment result in our e-commerce dataset shows that the UBM-IPS estimator is better than traditional IPS estimator that does not involve the UBM model (MSE: UBM-IPS: 0.1754, IPS: 0.1943), although the MSE is not 0 (totally unbiased). Thus, we use UBM-IPS to evaluate different algorithms empirically in the offline experiments.

Since $k'$ depends on previous clicks, $\Phi(a_k,\cdot,\cdot|X)$ cannot be obtained directly. In practice, we first generate a recommended set by $\Phi$. Since $\Phi$ is deterministic in our case, the recommended set determines $\Phi(a_k,\cdot,\cdot|X)$. Then, we obtain $k'$ sequentially. More specifically, when the policy $\Phi$ chooses the set $S_{\Phi}$ after sampling a context set $X$ from dataset, we first collect $|D(\cdot|X)|$ records $D(\cdot|X)$ with the same context set $X$. Then, for the first commodity $a_1\in S_{\Phi}$, delete the terms that do not include $a_1$ in the Eq. (\ref{UBMIPS}):
\begin{align} \label{ubm_final}
\frac{1}{|D(\cdot|X)|}\sum\nolimits_{a_1\in D(\cdot|X)} r(a_1,k,k'|X)\frac{\langle \tilde{W}, \Phi(a_1,\cdot,\cdot|X)\rangle}{\langle \tilde{W}, \pi(a_1,\cdot,\cdot|X)\rangle} 
\end{align}
This equation is used to simulate the reward $r(a_1,1,0)$, where $k$ and $k'$ are the position of $a_1$ and the last click before $k$ in records respectively. Since $\Phi$ is deterministic, $\Phi(a_1,\cdot,\cdot|X) = \left\{\begin{array}{cl}
   1  & for \ k=1,k'=0  \\
   0  & others
\end{array} \right.$. Notice that the simulated reward would be larger than one since ${\langle \tilde{W}, \Phi(a_1,\cdot,\cdot|X)\rangle}$ would be larger than ${\langle \tilde{W}, \pi(a_1,\cdot,\cdot|X)\rangle}$. Then, we obtain the $k'$ for the second commodity $a_2$ based on reward $r(a_1,1,0)$:
$$k' = \left\{\begin{array}{cl}
   1  & r(a_1,1,0) \geq 1 \ or \ \mathcal{B}(r(a_1,1,0)) = 1  \\
   0  & \mathcal{B}(r(a_1,1,0)) = 0
\end{array} \right.$$
where $\mathcal{B}(\cdot)$ is the reward sampled from the Bernoulli distribution with mean $r(a_1,1,0)$. Given a fixed $k'$, we can use the above method to compute the reward of $a_2$. Repeating this process, the rewards of all commodities in set $S_{\Phi}$ can be obtained. Then, the $CTR_{sum}$ and $CTR_{set}$ can be calculated. For the $CTR_{sum}$, we directly use the average of the unbiased rewards over $T$. For the $CTR_{set}$, we use the the Bernoulli distribution to sample a reward (0 or 1) if $r\in[0,1]$. If the reward of a recommended item is not less than 1, $F(S_t) = 1$.

\textbf{Influence of Positions.} The influence of position is mainly determined by the structure of a page. Since a page's structure does not change in the long term, the position influence $w_{k,k'}$ can be considered as constants following \cite{lagree2016multiple}. We use the EM algorithm suggested by \cite{chuklin2015click,dempster1977maximum} to estimate $w_{k,k'}$ in advance. The max and min values are 0.55 and 0.17 respectively for our recommendation scenario. For the Yandex dataset, they are 0.96 and 0.015. The difference is caused by the distribution of clicks in these two datasets. Since the Yandex dataset is collected from a web search scenario, $79.3\%$ of users click the first web page shown in the result, which is usually the most relevant to the keyword they queried. Clicks are very rare for low-ranking web pages. Therefore, the gap between the max and min position influences is more significant. However, for our recommendation scenario, users usually are not purposeful and prone to browse more items (see Fig. \ref{pie}). Thus, the gap of position influences is relatively small.

\subsection{Benchmark Algorithms}
We evaluate 4 baseline methods to illustrate the performance of our proposed algorithm. We only compare with combinatorial bandit algorithms that are based on LinUCB considering fairness and the problem setting. The first one is a context-free combinatorial bandit algorithm merged with the PBM model. The other three algorithms are contextual combinatorial bandit algorithms. Theoretical optimal parameters are used in experiments.
\begin{itemize}
\item \textbf{PBM-UCB} \cite{lagree2016multiple}: The PBM-UCB algorithm combines the PBM model with the UCB algorithm. Positions bias is considered to compute the mean and variance of an arm's reward by applying the position-based model (PBM), which assumes that the position bias is only related to the rank of an item. The position bias is obtained by the EM algorithm mentioned in \cite{lagree2016multiple}.

\item \textbf{C\textsuperscript{2}UCB} \cite{qin2014contextual}: C\textsuperscript{2}UCB algorithm is a combinatorial extension of the LinUCB algorithm. In each round, the algorithm selects $K$ items with top-$K$ upper confidence bounds.
After displaying them to a user, the algorithm can observe the rewards for these $K$ items and execute the update equations for each of these items.

\item \textbf{CM-LinUCB}: CM-LinUCB is a simple extension of \cite{li2016contextual,zong2016cascading} based on the Cascading Model. This model assumes that a user scans items from top to bottom until they find a relevant item. After they click the relevant item, they will leave without view other items behind. Thus, we ignore items behind the first click and only use the rest of samples to update parameters in this extension, which is the same as the First-Click algorithm used in \cite{katariya2016dcm}.

\item \textbf{DCM-LinUCB} \cite{liu2018contextual}: DCM-LinUCB is an contextual bandit algorithm based on Dependent Click Model. DCM is an extension of CM by assuming that after a user clicked an item, they may still continue to examine other items. The model introduces a satisfaction variable to determine whether the user will continue to view other items or not. If the satisfaction variable equals to 1, the user will leave. Otherwise, he will continue to scan other items.
\end{itemize}

\begin{figure*}[!t]
 \centering
 \subfigure[$CTR_{sum}$]{\includegraphics[width=.4\textwidth]{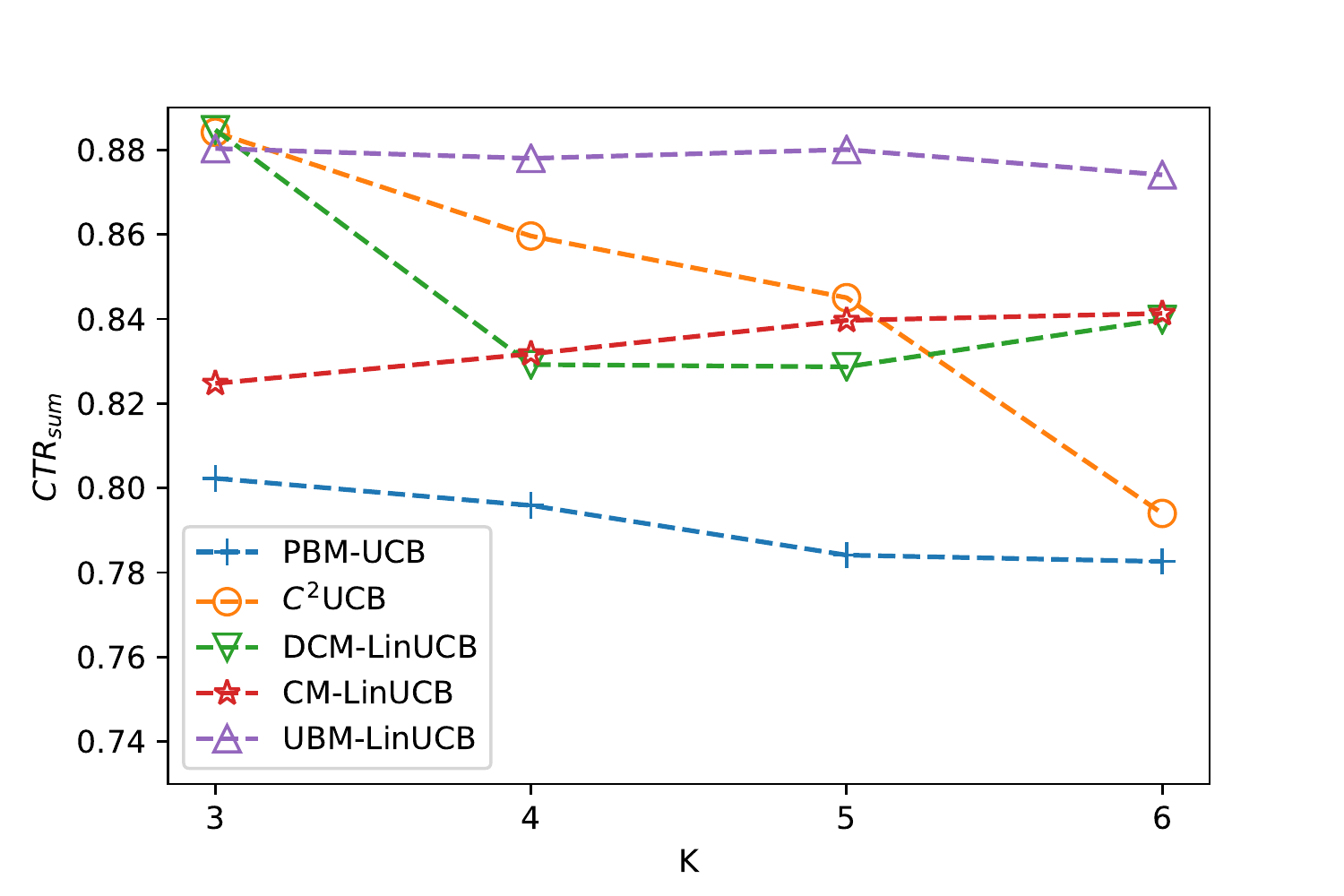}\label{yandex_sum}}
 \subfigure[$CTR_{set}$]{\includegraphics[width=.4\textwidth]{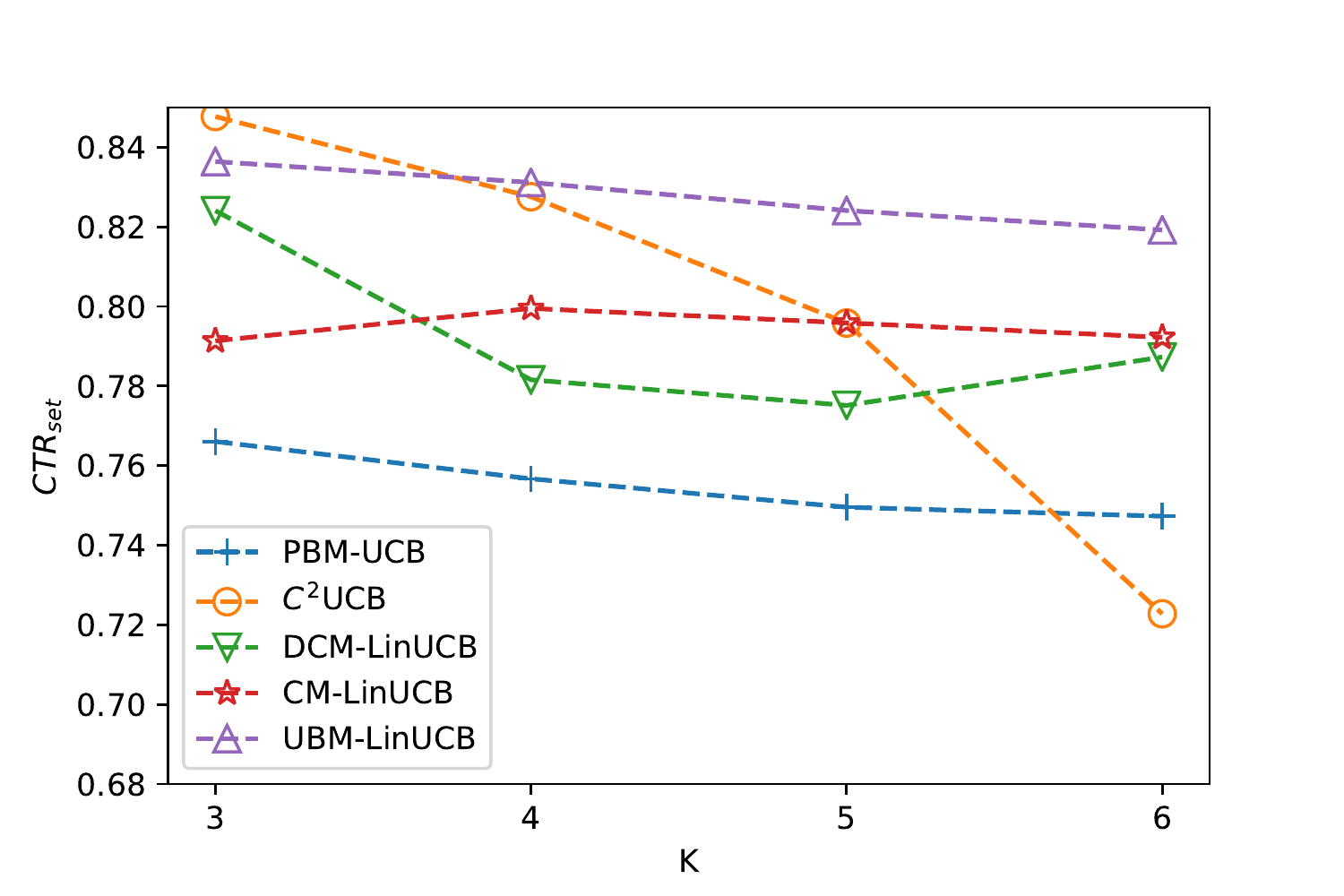}\label{yandex_set}}
 \caption{The performances of algorithms when $K$ changes.} \label{yandex_fig}
\end{figure*}

\subsection{Web Search Recommendation}
The Yandex Personalized Web Search dataset contains 35 million search sessions. Each session contains a user ID, a query ID, IDs of web pages shown as result, and clicks. The top 3 most frequent queries are used for evaluation. The position influences are estimated by the PyClick library \cite{chuklin2015click}. Since the web pages in the dataset do not have corresponding features, we build features for all the web pages. We first construct a user-website matrix $M_{u\times m}$ based on the records in the dataset, where $u=194749$ and $m=2701$ is the number of users and web pages respectively. The value of the element $M(i,j)$ is determined by the attractiveness of a web page $j$ for a user $i$ estimated by a part of Eq. \ref{ubm_final}:
\begin{align} 
\frac{1}{|D(\cdot|X)|}\sum\nolimits_{a_{j}\in D(\cdot|X)\rangle} r(a_{j},k,k'|X)\frac{1}{\langle \tilde{W}, \pi(a_{j},\cdot,\cdot|X)\rangle} 
\end{align}
where $X=i$ represents the user $i$. Thus, the simulated reward can be obtained by multiplying the position weight ${\langle \tilde{W}, \Phi(a_1,\cdot,\cdot|X)\rangle}$ given any policy $\Phi$. The features are generated by the truncated randomized SVD provided by Scikit-Learn library\footnote{\url{https://scikit-learn.org/stable/}}, i.e., $M \approx USV^T$. We set the dimension of $S$ to 10 and use the vector $[U(i),V(j)]$ as the feature of web page $j$ for the $i$-th user. Notice that $|S|=10$ is a balance between accuracy and complexity. Since the truncated randomized SVD is an approximated decomposition method, a smaller $|S|$ leads to worse accuracy. For example, all the contextual bandit algorithms cannot perform normally when $|S|=5$. And a larger $|S|$ increases the computational complexity since the inverse of $A_k$ need $O(d^3)$ time in each round, where $d$ is the dimension of the feature vector. 

For each round, a user is randomly chosen to decide the attractiveness of different web pages. We only use the web pages that are shown to the user as the candidate set, since we cannot know the attractiveness of a web page if it is never displayed to the user. The simulated reward is estimated by the estimator we mentioned above using Eq. \ref{ubm_final}.

\subsubsection{Result}
The result after 5000 rounds is shown in Fig. \ref{yandex_fig} (average of 10 runs). When $K=3$, contextual bandit algorithms perform similarly except CM-LinUCB. The reasons are: 1) The influence of positions is relatively small when $K=3$. Thus, the effectiveness of different models cannot be revealed well. 2) Due to the hypothesis of the Cascading Model, CM-LinUCB only uses samples not behind the first click to update, which leads to insufficient training. 
When $K$ becomes larger, the performance of C\textsuperscript{2}UCB declines dramatically and other contextual bandit algorithms outperform it, since C\textsuperscript{2}UCB does not consider the impact of positions. When $K$ is larger than 4, the CTRs change slightly and do not increase anymore, since $91.6\%$ of users click one web and $98.4\%$ of users clicks 2 web pages or less. Moreover, the ratio of positive samples descends rapidly and has a negative impact on the learning results, especially for the method not involving click models.

\begin{figure}[!t]
 \centering
 \includegraphics[width=.35\textwidth]{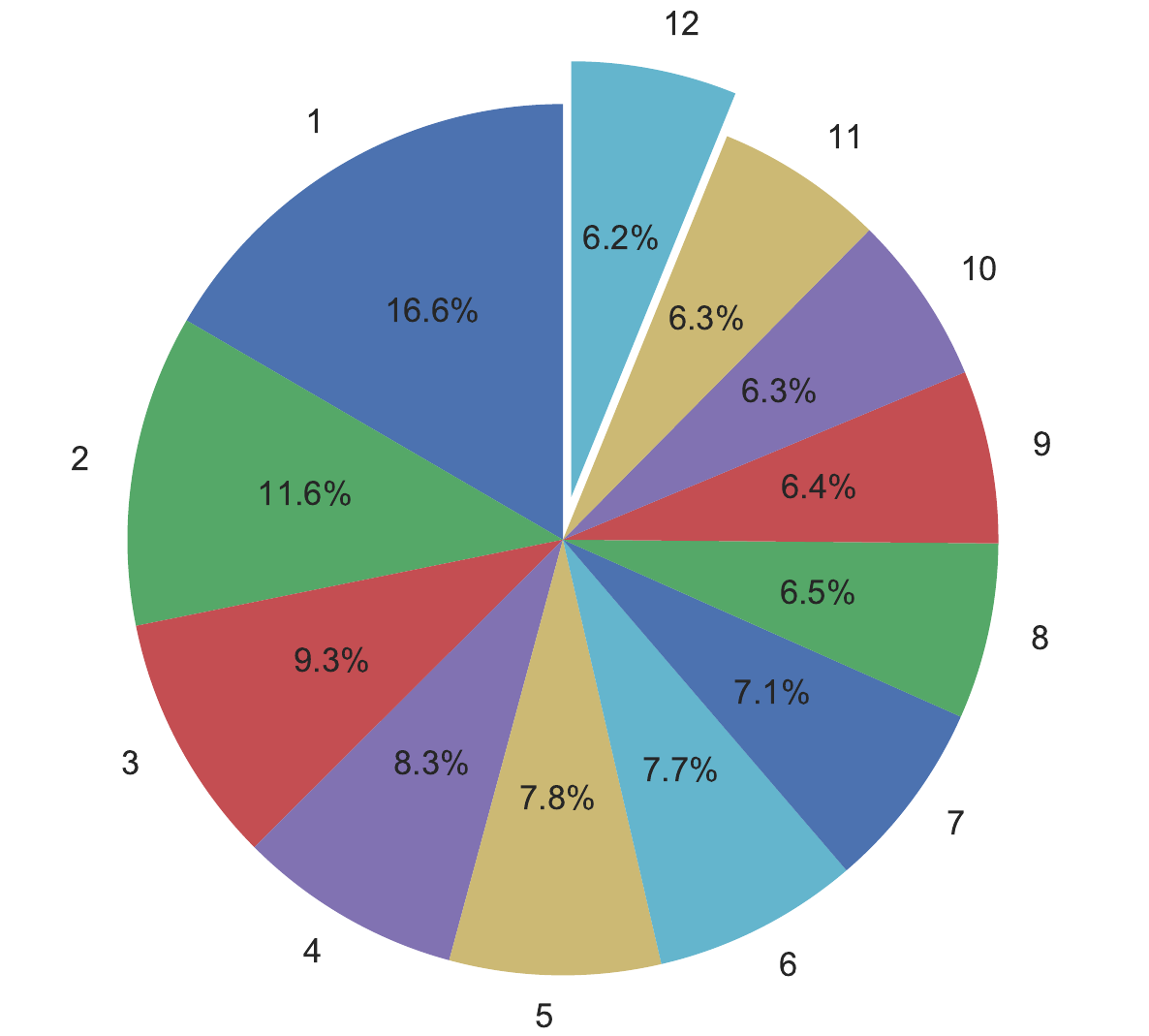}
 \caption{The distribution of the last clicks' positions. The numbers outside the pie are positions of the last clicks. For example, 1 means that the last clicks of users are at the first position.} \label{pie}
\end{figure}

\subsection{E-commerce Recommendation}
Our real-world dataset is provided by Taobao. We utilize the data from a recommendation scenario in the platform, where 12 items are selected and then displayed to users. There are two special characteristics of this scenario. First, the number of commodities is fixed to 12. The mobile app of this platform will not recommend new items after all of these 12 items are browsed. Another characteristic is that users can only view the first item and part of the second item when they enter this scenario. To observe the rest of the items, users need to slide the screen.

Since we cannot know a user's real interest if he does not click any item, we remove the data without clicks from the original log data. After processing, our dataset contains 180k records, while each record contains a user ID, IDs of 12 commodities, context vectors of 12 commodities and a click vector corresponding to the 12 commodities. The context vector is a $56 \times 1$ array including some features of both a commodity and a user, such as the commodity's price and the user's purchasing power. In our offline experiment, we treat 12 commodities in each record as the candidate set since whether users will click other commodities is unknown.


To illustrate the position bias and the pseudo-exposure issue better, Fig. \ref{pie} demonstrates the distribution of the last clicks' positions. The number around the circle is the position of the last click. Only 6.2\% of the last items are clicked by users in our data.
If we assume that the items at the positions behind the last click are affected by the pseudo-exposure, $\frac{11}{12}$ of items are biased for the records whose last click's position is 1, which accounts for 16.6\% of the data. By calculating the weighted sum $\frac{11}{12} \times 16.6\% + \frac{10}{12} \times 11.6\% + \dots $, about 54\% of rewards are affected by the pseudo-exposure issue. Notice that this percentage is a rough estimate, since it is possible that some items at positions behind the last click are viewed by users.

\begin{figure*}[!t]
 \centering
 \subfigure[$K=3$, $CTR_{sum}$]{\includegraphics[width=.24\textwidth]{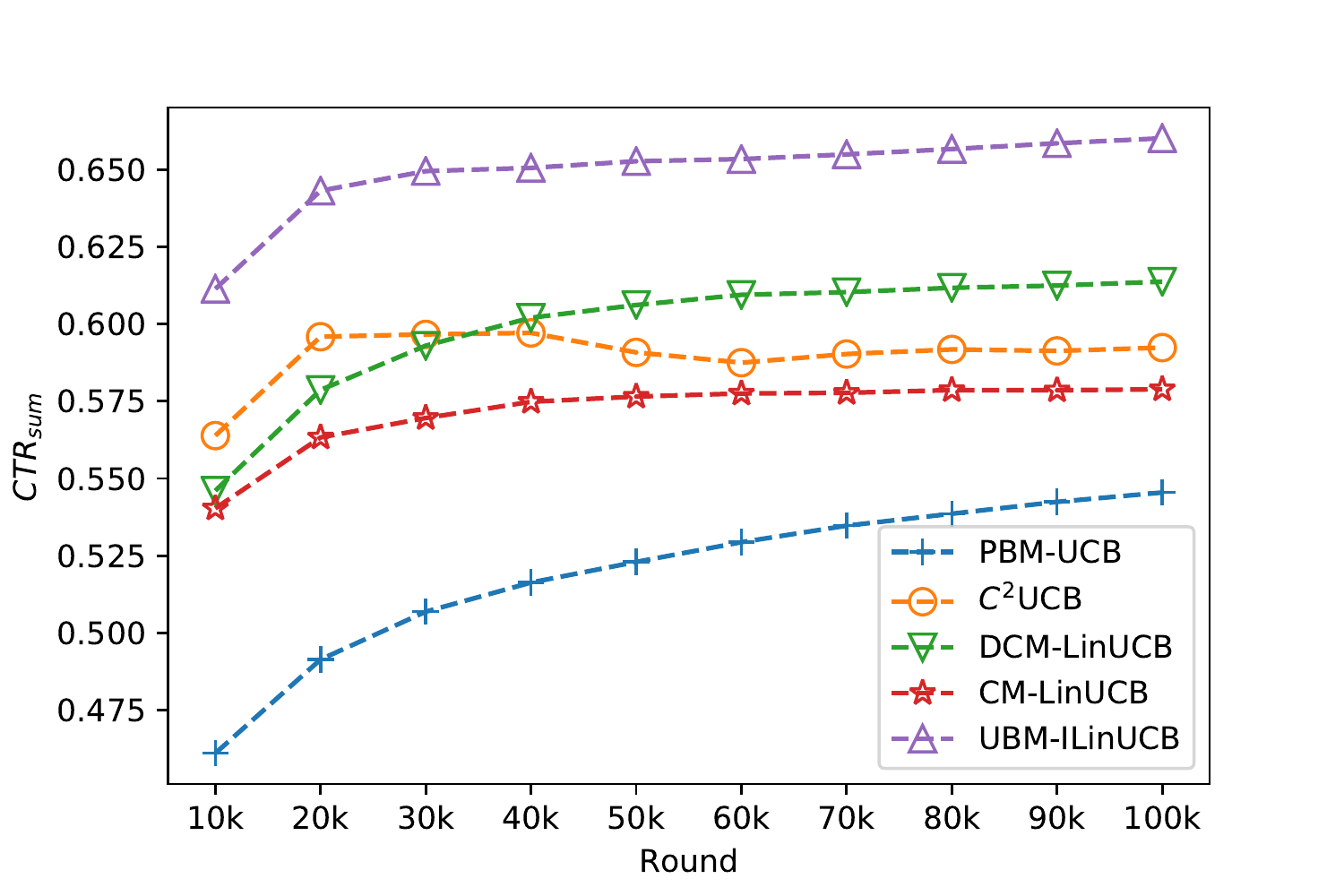}\label{sumk3}}
 \subfigure[$K=3$, $CTR_{set}$]{\includegraphics[width=.24\textwidth]{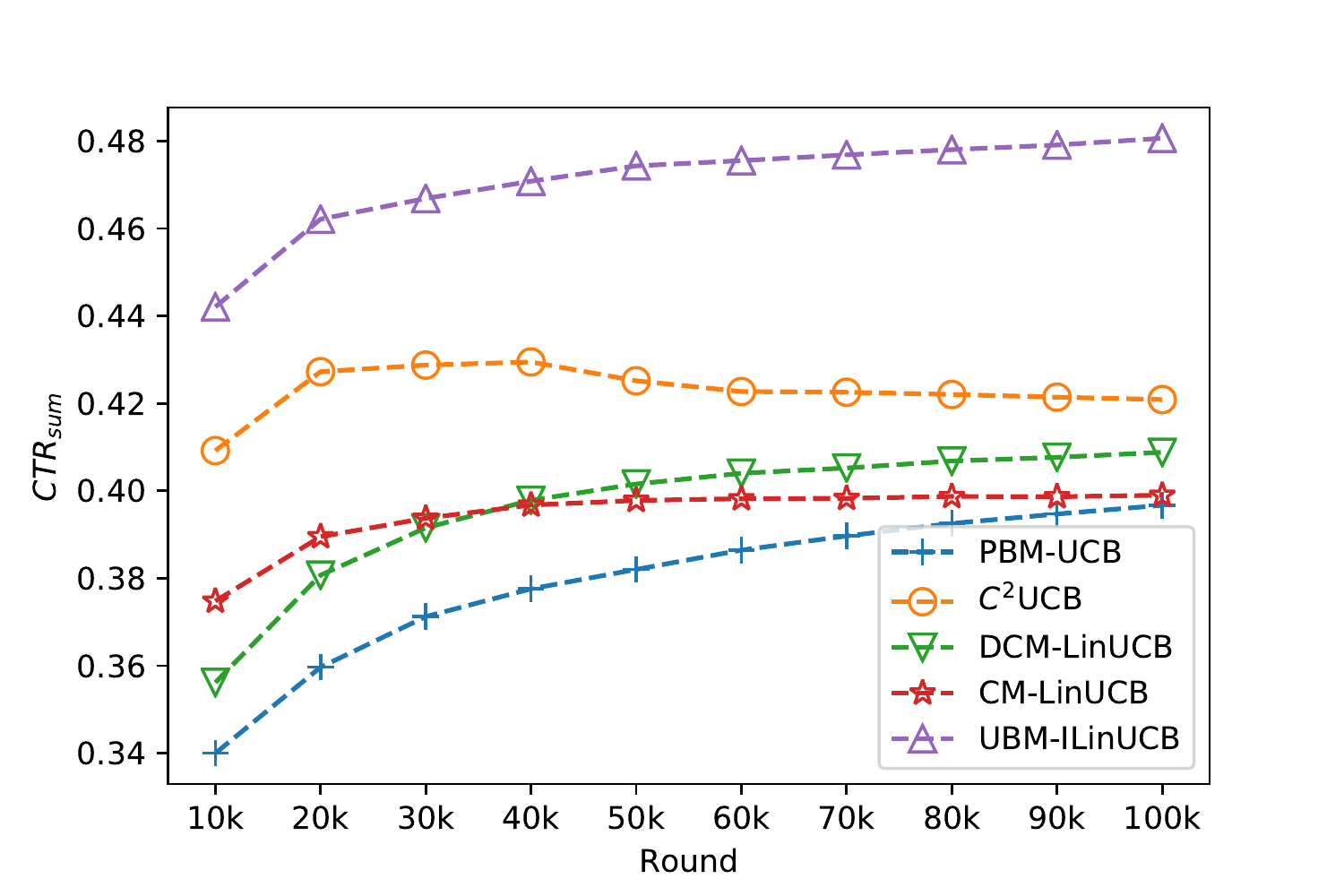}\label{setk3}}
 \subfigure[$K=6$, $CTR_{sum}$]{\includegraphics[width=.24\textwidth]{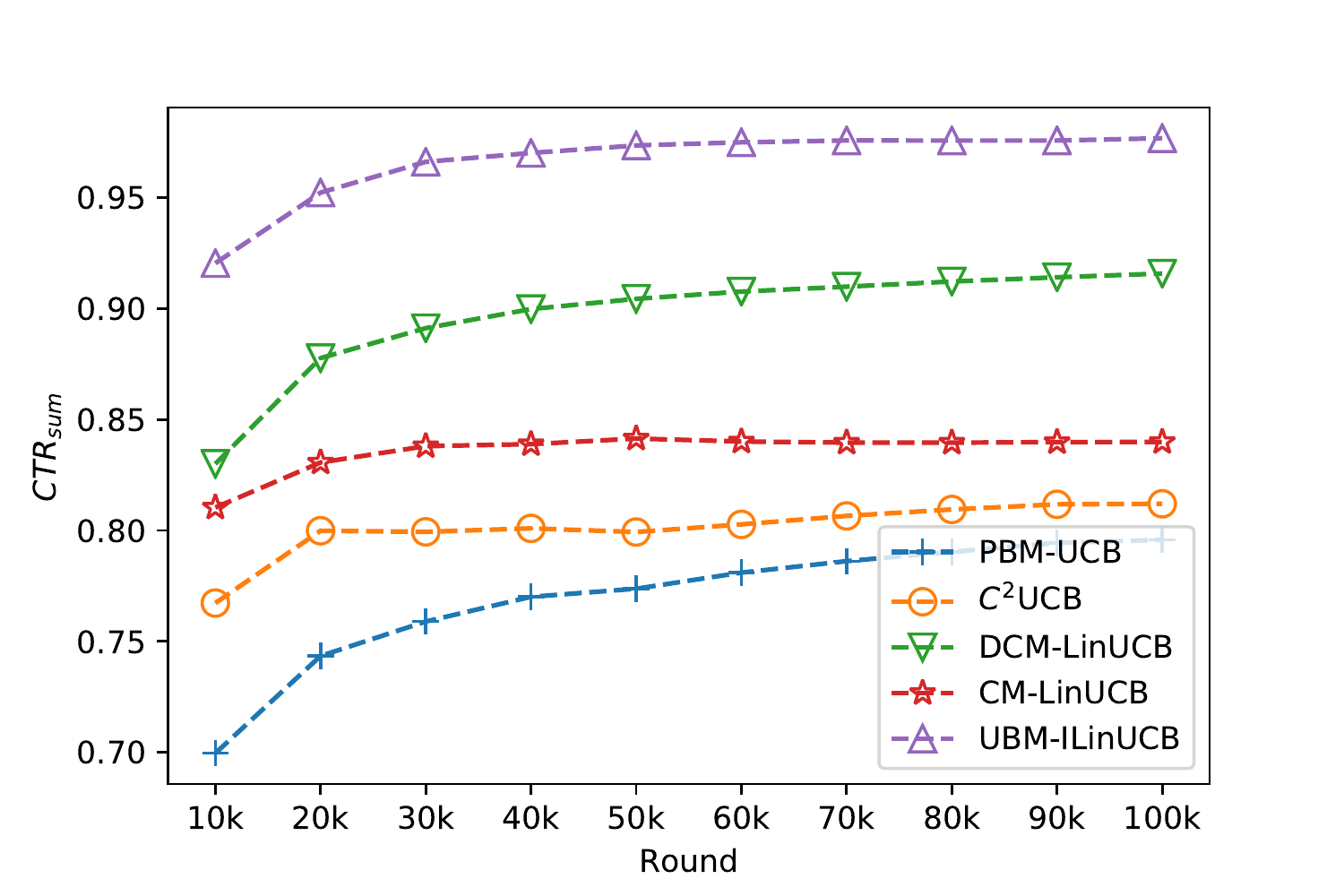}\label{sumk6}}
 \subfigure[$K=6$, $CTR_{set}$]{\includegraphics[width=.24\textwidth]{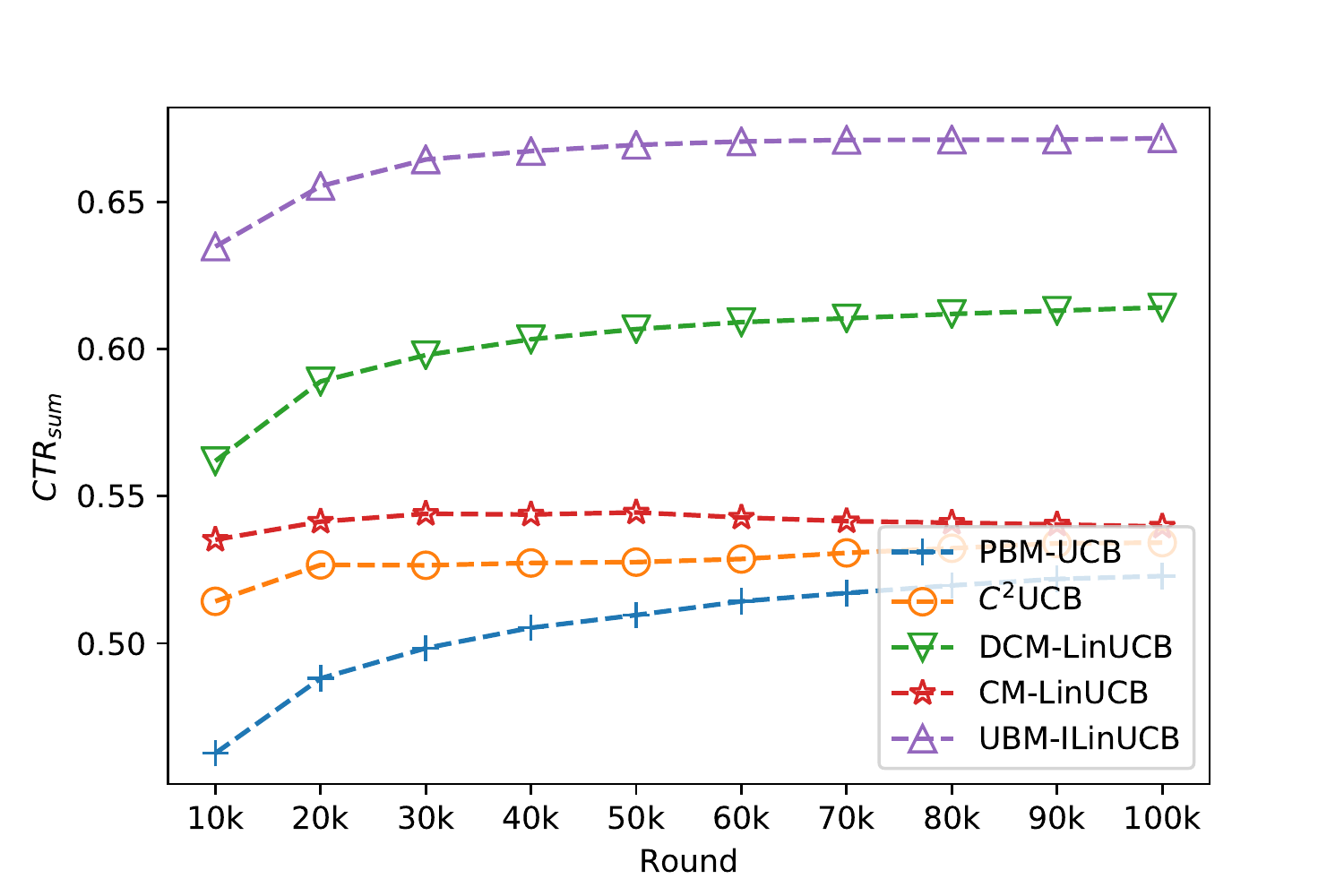}\label{setk6}}
 \caption{Performance of each algorithm under two CTR metrics with the increase of the number of rounds.}\label{convergence}
\end{figure*}

\begin{table*}[!t]
    \centering
    \caption{CTRs with the change of $K$ from $3$ to $6$ when $\alpha$ is optimal. The numbers with a percentage is the CTR lift.}
    \scalebox{1.3}{
    \begin{tabular}{c|c|c|c|c|c|c|c|c}
    \hline
    \hline
    \multirow{2}{*}{Algorithm} & \multicolumn{2}{c|}{$K=3$} & \multicolumn{2}{c|}{$K=4$}&\multicolumn{2}{c|}{$K=5$}&\multicolumn{2}{c}{$K=6$} \\ \cline{2-9}
    &$CTR_{sum}$ &$CTR_{set}$ &$CTR_{sum}$ &$CTR_{set}$ &$CTR_{sum}$ &$CTR_{set}$ &$CTR_{sum}$ &$CTR_{set}$\\
    \hline
    \multirow{2}{*}{C\textsuperscript{2}UCB} &0.592 &0.420 &0.643 &0.440 &0.747 &0.510 &0.812 &0.534\\
    &0\%&0\%&0\%&0\%&0\%&0\%&0\%&0\%\\
    \hline
    \multirow{2}{*}{PBM-UCB} &0.545 &0.396 &0.655 &0.453 &0.744 &0.519 &0.795 &0.522\\
    &-7.9\%&-5.7\%&1.8\%&2.9\%&-0.4\%&1.7\%&-2.1\%&-2.1\%\\
    \hline
    \multirow{2}{*}{CM-LinUCB} &0.578 &0.398 &0.674 &0.464 &0.771 &0.548 &0.839 &0.539\\
    &-2.3\%&-5.2\%&4.8\%&5.4\%&3.2\%&7.4\%&3.3\%&9.3\%\\
    \hline
    \multirow{2}{*}{DCM-LinUCB} &0.613 &0.408 &0.704 &0.474 &0.820 &0.570 &0.915 &0.614\\
    &3.5\%&-2.8\%&9.5\%&7.7\%&9.7\%&11.7\%&12.7\%&14.9\%\\
    \hline
    \multirow{2}{*}{UBM-LinUCB} &\textbf{0.660} &\textbf{0.480} &\textbf{0.781} &\textbf{0.538} &\textbf{0.869} &\textbf{0.604} &\textbf{0.976} &\textbf{0.671}\\
    &\textbf{11.4\%}&\textbf{14.2\%}&\textbf{21.4\%}&\textbf{22.2\%}&\textbf{16.3\%}&\textbf{18.4\%}&\textbf{20.2\%}&\textbf{25.6\%}\\
    \hline
    \hline
    \end{tabular}}
    \label{ctr}
\end{table*}

\subsubsection{Feature Reduction}
Additionally, we reduce the dimension of features inspired by \cite{li2010contextual}, since the time of computation is closely related to the dimension of context. The update equation of $A_k$ takes $O(d^2)$ time and the inverse of $A_k$ need $O(d^3)$ time in each round. Thus, in order to improve the scalability of the algorithm, we apply a pre-trained denoising autoencoder \cite{vincent2008extracting} to reduce the dimension of the feature space from 56 to 10. The encoder contains 4 hidden layers with 25,10,10,25 neural units respectively and an output layer with 56 units. The ReLU activation function is applied to the outputs of the hidden layers. The context vectors $x$ are perturbed by the vectors randomly drawn from a uniform distribution between 0 and 1, formally, $input = 0.95*x + 0.05*noise$. The objective of this autoencoder is to minimize the MSE loss $\|x-output\|^2_2$ between the context and the output. We adopt the output of the second hidden layer as extracted features, whose dimension is 10. The extracted features are utilized as the context in our algorithm and other compared methods.

\subsubsection{Result}
We conduct offline and online experiments to evaluate various algorithms. The first part of experiments illustrates the curves of algorithms with the increase of the number of rounds when $K=3$ and $6$. The second set of experiments shows the performance of different algorithms with the change in the number of recommended commodities. Finally, we implement the online experiment for three days in a real platform and report the performance in different days. The UBM-LinUCB algorithm performs the best in most cases.

\begin{figure}[!t]
 \centering
 \subfigure[$CTR_{sum}$]{\includegraphics[width=.45\textwidth]{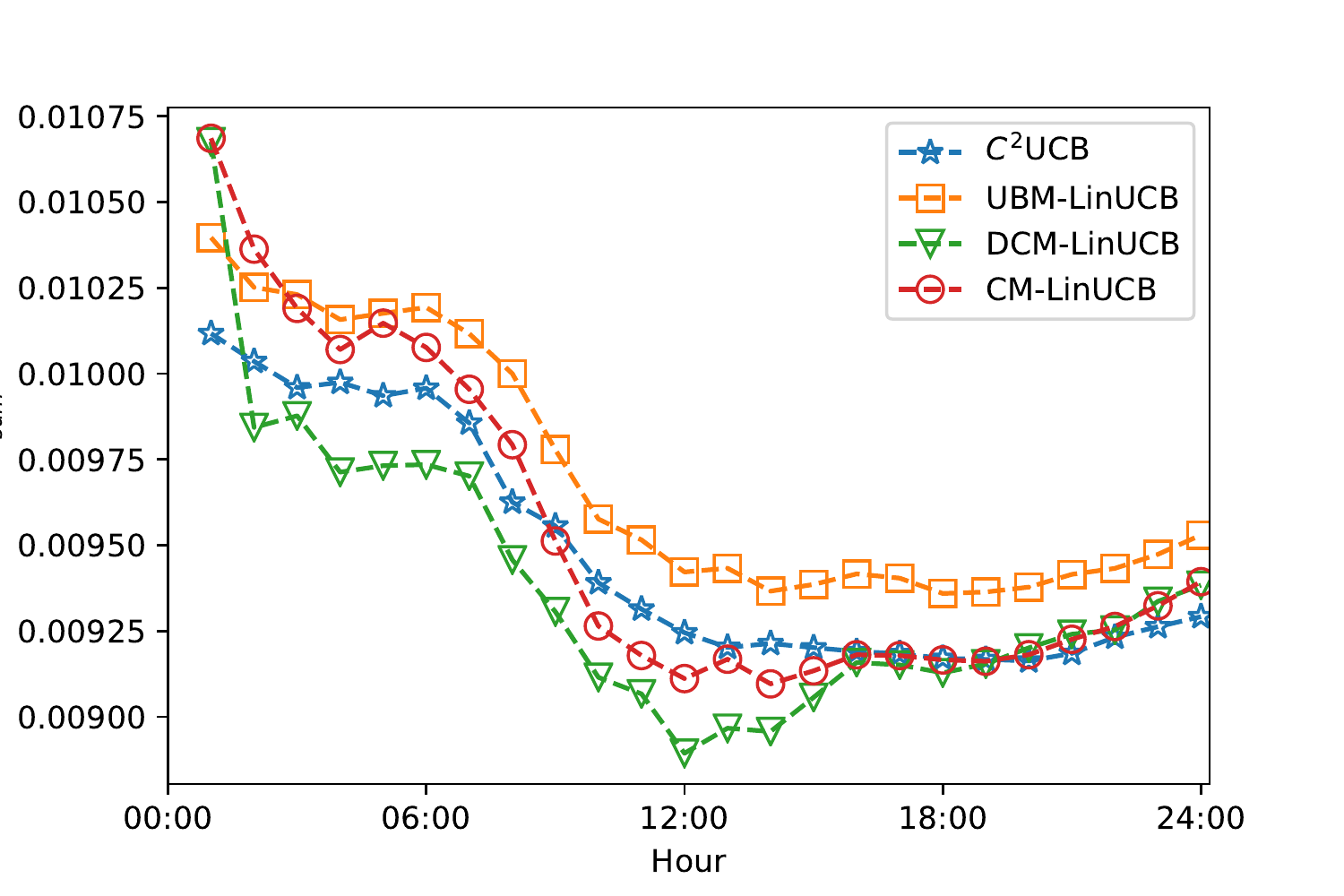}\label{online_sum}}
 \subfigure[$CTR_{set}$]{\includegraphics[width=.45\textwidth]{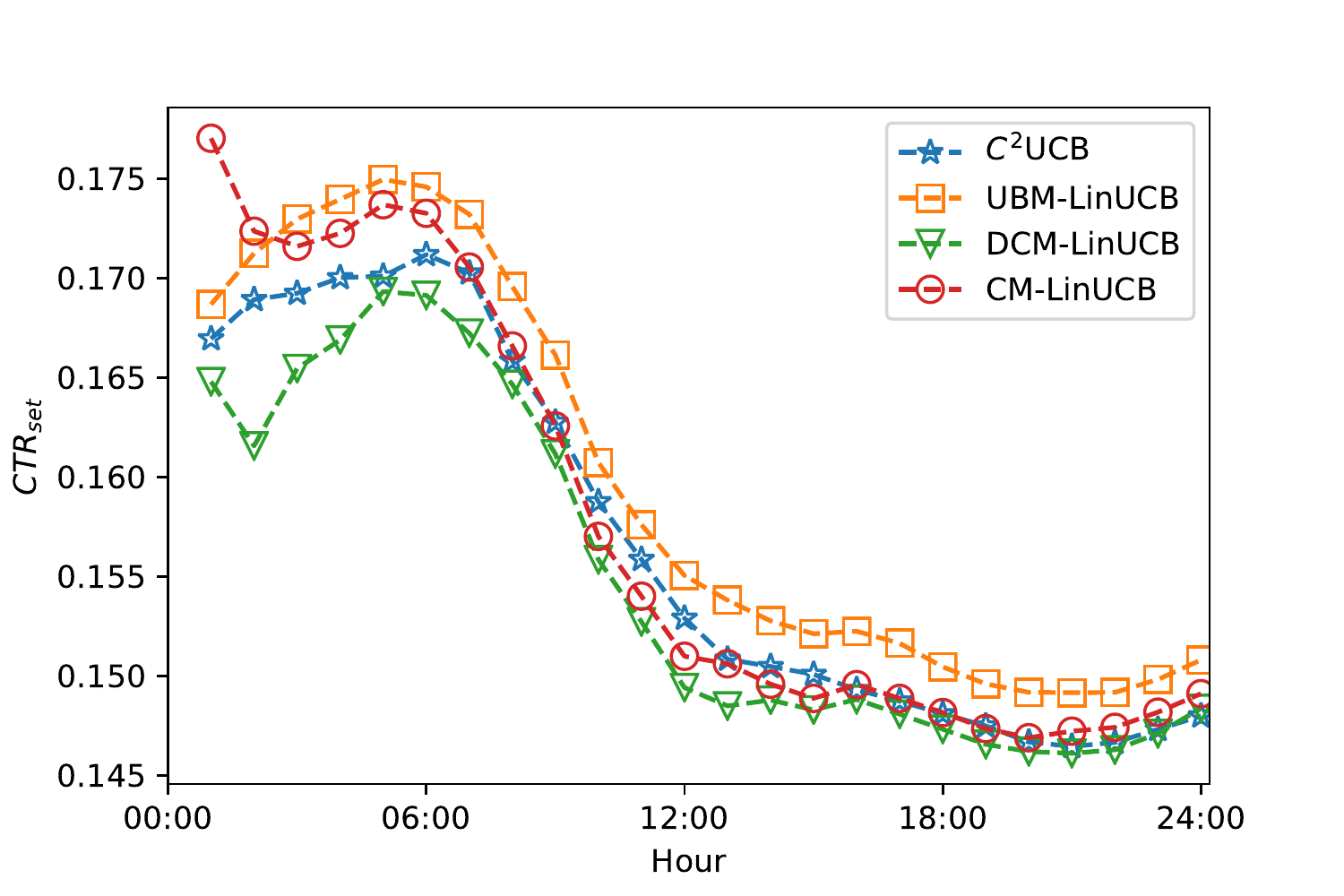}\label{online_set}}
 \caption{Trends in two metrics in the online experiment.} \label{online_fig}
\end{figure}
\begin{figure}[!t]
 \centering
 \subfigure[$CTR_{sum}$]{\includegraphics[width=.45\textwidth]{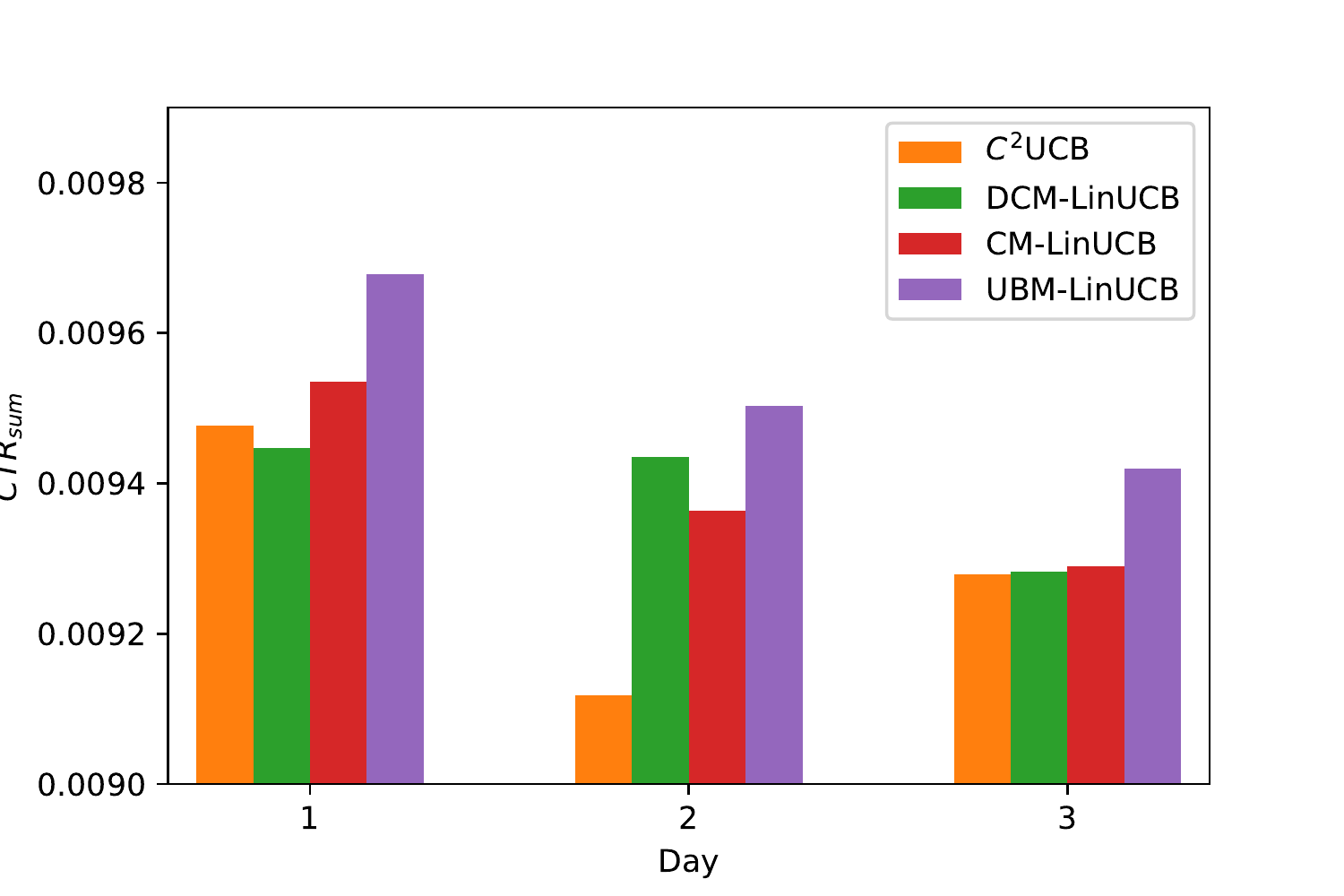}\label{online_sum_bar}}
 \subfigure[$CTR_{set}$]{\includegraphics[width=.45\textwidth]{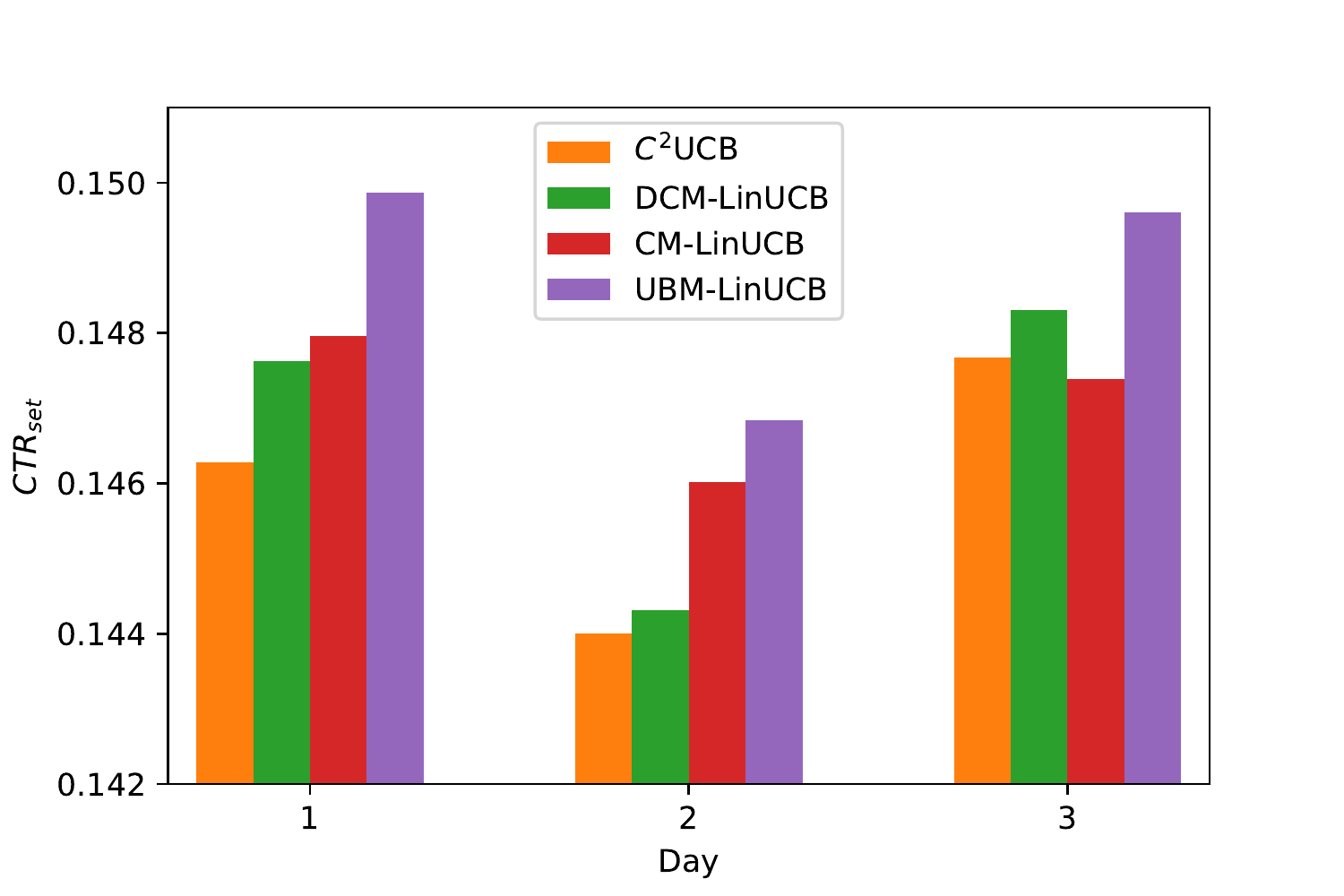}\label{online_set_bar}}
 \caption{CTRs of 3 days in the online experiment.} \label{online_fig_bar}
\end{figure}

\begin{figure}[!t]
 \centering
 \includegraphics[width=.4\textwidth]{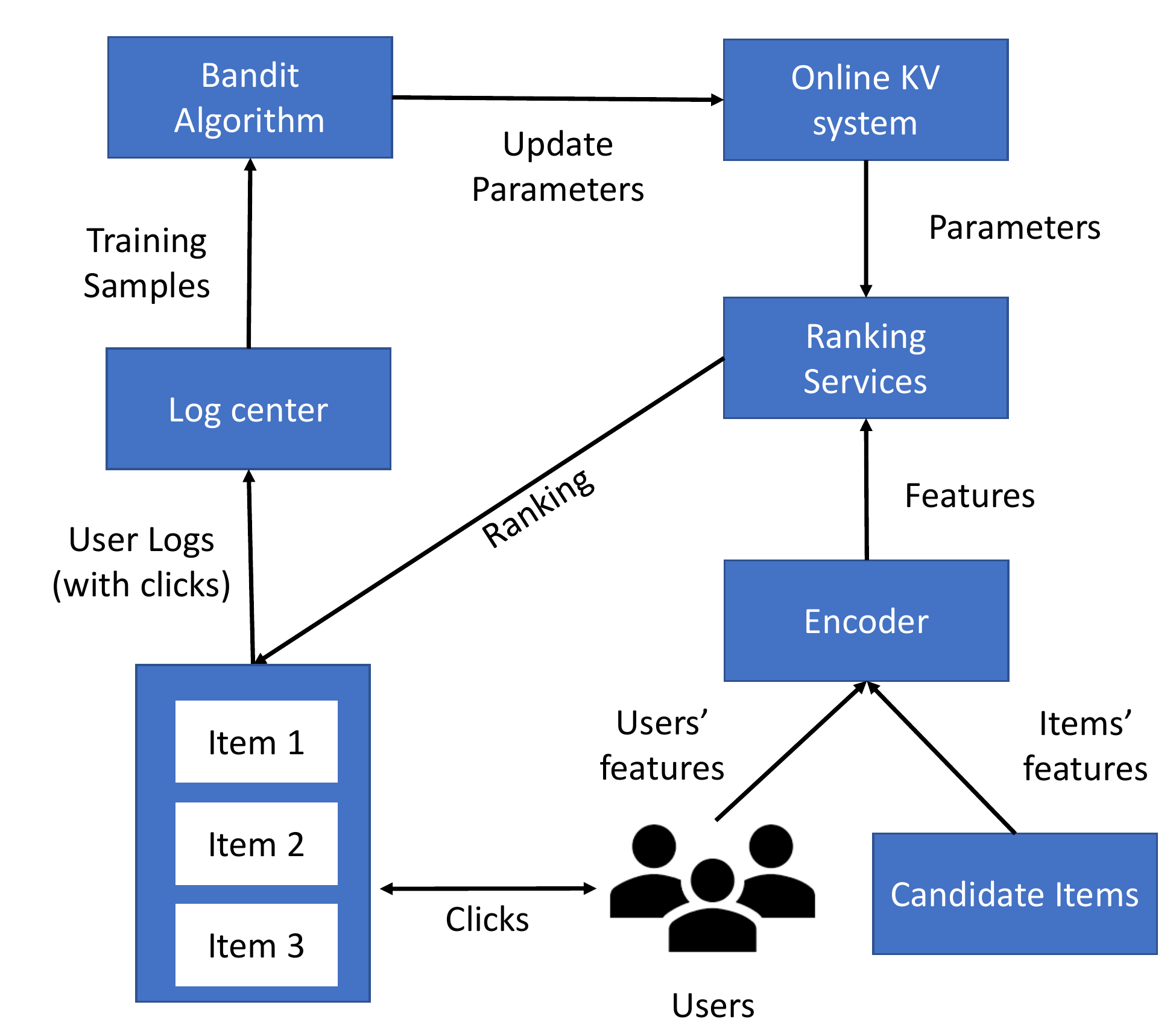}
 \caption{The ranking system of our online experiment. The parameters of our algorithm are stored in the online KV system and used to rank items when users enter our scenario.} \label{flow_structure}
\end{figure}

\textbf{Performance with the increase of the number of rounds. }
Fig. \ref{convergence} demonstrates the performance of our algorithm under two metrics. Sub-figures \ref{sumk3} and \ref{sumk6} are the trends under the $CTR_{sum}$ metric when $K =3$ and 6 respectively. The other two sub-figures show the performance under the $CTR_{set}$ metric. 
The UBM-LinUCB algorithm performs the best in all of these figures. Since the PBM-UCB algorithm does not consider the features of items, it performs worse than other methods. The speed of convergence is similar for the contextual bandit algorithms, while PBM-UCB is relatively slow. This is coincident to theoretical analysis that the regret of contextual bandit algorithms is independent of the number of items rather than non-contextual ones.

When $K=6$, the speed of convergence is faster for almost all the algorithms. For example, for our UBM-LinUCB algorithm, it takes about 30k rounds to converge when $K=6$ while its performance increases slowly until 60k or 70k rounds when $K=3$. The reason would be related to the number of samples obtained in each round. More specifically, $K=6$ means that our algorithm can receive 6 samples per round, which is double compared to the case when $K=3$. The more samples received in each round, the faster the speed of convergence is. One exception would be the CM-LinUCB. Since it only considers the samples before the first click, not all the samples are used for the update. Thus, the learning rate of CM-LinUCB would be insensitive to the increase of $K$.



\textbf{Experiments for the Number of Recommended Items $K$. }
We run algorithms to recommend various number of items and summarize the result in Table \ref{ctr}. The C\textsuperscript{2}UCB method is considered as a baseline in the table and the number with a percentage is the CTR lift compared to C\textsuperscript{2}UCB. A larger CTR of an algorithm indicates its effective use of data and features. We alter the value of $K$ to demonstrate the influence of the recommended set's size. The UBM-LinUCB algorithm improves two CTR metrics by 20.2\% and 25.6\% when $K=6$.

The advantage of our algorithm becomes more significant with the increase of $K$. The reason is that the influence of positions is not significant when $K$ is small. When $K=3$, the difference of the value of position weight is small. Thus, the CTR of an item is similar in the first three positions, which indicates that 1) a sub-optimal policy can also perform well especially when the recommended items of this sub-optimal policy is the same as the optimal one and the only gap is the order; 2) the bias introduced in learning is also relatively small. Then, baseline algorithms without considering the behavior model of users or using other inaccurate models would obtain good performance (for example C\textsuperscript{2}UCB). However, when $K$ becomes larger, the performance of C\textsuperscript{2}UCB is exceeded by other algorithms that involve click models. Moreover, the performances of different click models can be illustrated more straightforwardly. The performance of CM-LinUCB becomes worse compared with DCM-LinUCB and UBM-LinUCB, since it only considers one click.



\subsection{Online experiment}
We implement an online experiment in Taobao. In our scenario, 12 items should be selected from about 40,000 commodities and displayed on users' mobile devices. Bucket test or called A/B test is adopted. More specifically, users that enter our scenario will be randomly divided into different buckets. Each algorithm is deployed to one bucket and recommends about 5 millions of communities per day in the large-scale online evaluation. We use the same metrics as the offline experiment and report the performance of four algorithms. We do not evaluate PBM-UCB since it is infeasible to store the means and variances of all the 40,000 commodities in our online platform. 

Fig. \ref{flow_structure} illustrates the structure of our online platform. When users enter our scenario, their information will be concatenated with each item's feature. Then, an encoder mentioned in the previous section is used to reduce the dimension of features. In the ranking services, items are ranked according to the features and parameters. More specifically, users are divided randomly to different buckets which correspond to different algorithms. According to the ids of buckets, parameters restored in the online KV system are read and used to compute the rank given features. The results are displayed in a mobile application and users give their feedback by interaction. The log center receives the raw logs and transfers them to samples to train different bandit algorithms. Each algorithm only obtains samples from its own bucket to update parameters.

Fig. \ref{online_fig} shows the trends of cumulative mean of CTRs in three days with the increase of time. Since the number of samples is relatively small in the first few hours, the performance of different algorithms is unstable. Except the perturbation on the first two hours, our algorithm constantly performs the best and the improvement is stable. Notice that all the algorithms performs well before 6 a.m, which is normal in our platform. The reason would be that active users shift with time. More specifically, people who use our app from 0 a.m to 6 a.m usually are keen to shop and thus more likely to click recommended items. DCM-LinUCB and CM-LinUCB perform similarly, possibly because a portion of users only click one item. Since the portion is not fixed, their performances are fluctuated each day. Fig. \ref{online_fig_bar} summarizes the total CTRs in each day. Our algorithm can improve $CTR_{sum}$ and $CTR_{set}$ steadily compared to benchmark algorithms, although each algorithm's CTRs change in different dates. The result is reliable and meaningful considering the complex online environment and the huge volume of the recommended items (about 60 million). Since CTR is usually positively correlated with the turnover and profit of an e-commerce platform, our algorithm can improve the profit directly.


\section{Conclusion}
In this paper, we focus on addressing the position bias and pseudo-exposure problem in a large-scale mobile e-commerce platform. First, we model the online recommendation as a contextual combinatorial bandit problem. Second, to address the pseudo-exposure problem, we utilize the UBM model to estimate probabilities that users view items at different ranks and propose a novel method UBM-LinUCB to take advantage of the estimated probabilities in our linear reward model. Third, we prove a sublinear expected cumulative regret bound $\tilde{O}(d\sqrt{TK})$ for our regret function. Finally, we conduct experiments to evaluate our algorithm in two real-world datasets by an unbiased estimator and an online scenario provided by Taobao, one of the most popular e-commerce platforms in the world.
Results indicate that our algorithm improves CTRs under two metrics in both offline and online experiments and outperforms other algorithms.

\begin{acks}
This work was supported by Alibaba Group through Alibaba Innovative Research (AIR) Program and Alibaba-NTU Joint Research Institute (JRI), Nanyang Technological University, Singapore.
\end{acks}

\bibliographystyle{ACM-Reference-Format}
\bibliography{acmart}

\newpage
\appendix
{\onecolumn 
\section{SUPPLEMENT}
\subsection{Proof of Lemma \ref{lemma_order}}
\begin{proof}
Notice that $F(S_t) = 1$ except none of item is clicked, formally, 
\begin{equation}
\mathbb{E}[F(S_t)]=1-\prod\nolimits_{k=1}^K\left[1-\gamma(a_{k,t})w_{k,0}\right]
\end{equation}
It suffices to show that $\prod_{k=1}^K\left[1-\gamma(a_{k,t})w_{k,0}\right]$ is minimized with $a^*_{1,t},...,a^*_{K,t}$ when the arms with the $k$-th highest value $\gamma(a_{k,t}^*)$ aligns with the order of $w_{k,0}$, i.e., $\gamma(a_{1,t}^*)\geq\gamma(a_{2,t}^*)\geq...\geq\gamma(a_{K,t}^*)$.
\begin{equation*}
\begin{aligned}
\log{\prod\nolimits_{k=1}^K\left(1-\gamma(a_{k,t})w_{k,0}\right)}=&\sum\nolimits_{k=1}^K\log\left(1-\gamma(a_{k,t})w_{k,0}\right)\\
=&-\sum\nolimits_{k=1}^K\sum\nolimits_{n=1}^\infty\frac{\left(\gamma(a_{k,t})w_{k,0}\right)^n}{n}\\
=&-\sum\nolimits_{n=1}^\infty\frac{1}{n}\sum\nolimits_{k=1}^K\gamma^n(a_{k,t})w^n_{k,0}\\
\geq&-\sum\nolimits_{n=1}^\infty\frac{1}{n}\sum\nolimits_{k=1}^K\gamma^n(a_{k,t}^*) w^n_{k,0}\\
=&\log{\prod\nolimits_{k=1}^K(1-\gamma(a_{k,t}^*)w_{k,0})}
\end{aligned}
\end{equation*}
The second equality follows $\log(1-x)=-\sum_{n=1}^\infty\frac{x^n}{n},\forall x\in(0,1)$ and the inequality follows the Rearrangement inequality, which states that $\sum_{i}x_iy_i\geq\sum_{i}x_{\sigma(i)}y_{\sigma(i)}$ for any sequences of real numbers $x_1\geq x_2\geq...\geq x_n$ and $y_1\geq y_2\geq...\geq y_n$, and any permutations $x_{\sigma(1)},...,x_{\sigma(n)}$.
\end{proof}

\subsection{Proof of Lemma \ref{lemma:regret_set}}
\begin{proof}
Recall that $a_{k,t}$ is the selected arm by UBM-LinUCB at round $t$ and position $k$ based on Lemma \ref{lemma_order}. Let $\eta_{k,t}=\gamma(a^*_{k,t})w_{k,0}-\gamma(a_{k,t})w_{k,0} \geq 0$ and divide $\mathbb{E}[F(S_t)]$ into the sum of the probabilities that the first click appears at each position $k \in [1,K]$:
\begin{equation}
\begin{aligned}
&\sum\nolimits_{t=1}^T\mathbb{E}[F(S_t)]\\
&=\sum\nolimits_{t=1}^T\sum\nolimits_{k=1}^K\prod\nolimits_{j=1}^{k-1}\left[1-\left(\gamma(a^*_{j,t})w_{j,0}-\eta_{j,t}\right)\right]\left(\gamma(a^*_{k,t})w_{k,0}-\eta_{k,t}\right)\\
&\geq \sum\nolimits_{t=1}^T\sum\nolimits_{k=1}^K\prod\nolimits_{j=1}^{k-1}\left(1-\gamma(a^*_{j,t})w_{j,0}\right)\left(\gamma(a^*_{k,t})w_{k,0}-\eta_{k,t}\right)\\
&\geq\sum\nolimits_{t=1}^T\sum\nolimits_{k=1}^K \left[ \prod\nolimits_{j=1}^{k-1}\left(1-\gamma(a^*_{j,t})w_{j,0}\right)\gamma(a^*_{k,t})w_{k,0}-\eta_{k,t} \right]\\
&=\sum\nolimits_{t=1}^T\mathbb{E}[F(S^*_t)]-\sum\nolimits_{t=1}^T\sum\nolimits_{k=1}^K\eta_{k,t}
\end{aligned}
\end{equation}
Thus, we have:
\begin{equation}
\begin{aligned}
R(T)\leq&\sum\nolimits_{t=1}^T\sum\nolimits_{k=1}^K\eta_{k,t}\\
=& \sum\nolimits_{t=1}^T\sum\nolimits_{k=1}^Kw_{k,0}\left[\gamma(a^*_{k,t})-\gamma(a_{k,t})\right] \\
\end{aligned}
\end{equation}
Then, we finish the proof.
\end{proof}

\subsection{Proof of Theorem \ref{theorem:regret_sum}} \label{sec_proof}
Assume that (1) $r_{k,a} = w_{k,0} \theta^* x_a$, (2) $1 \leq \sum_{k=1}^K w_{k,k-1}^2 = \phi_w' < K$, (3) $\|x\|_2 \leq 1$ and (4) $\|\theta^*\|_2^2 \leq \beta$. The upper bound of regret is defined by Lemma \ref{lemma:regret_set}:
$$R(T) \leq \mathbb{E}\left [ \sum_{t=1}^T \sum_{k=1}^K w_{k,0}\left [\gamma(a_{k,t}^*)-\gamma(a_{k,t})\right ]\right ]$$
where $\gamma(a) = \theta^* x_a$ is the attractiveness of an item $a$ and $\theta^*$ is the optimal parameter. 
\begin{proof}
We define the event 
\begin{align*}
E = &\{|\langle x_{a_{k,t-1}}, \theta^*-\theta_{t-1} \rangle | \leq \alpha \sqrt{x_{a_{k,t-1}}^T A_{t-1}^{-1} x_{a_{k,t-1}}}, \\
& \forall {a_{k,t}}\in A_t, \forall t\leq T, \forall k \leq K \}
\end{align*} 
where $\langle \cdot, \cdot \rangle$ means the dot product and $\bar{E}$ is the complement of $E$. Then notice that $P(E) \leq 1$ and $w_{k,0}\left [\gamma(a_{k,t}^*)-\gamma(a_{k,t})\right ] \leq 1$, we have
\begin{align*}
    R(T) & \leq P(E) \mathbb{E} \left  [\sum_{t=1}^T \sum_{k=1}^K w_{k,0}\left [\gamma(a_{k,t}^*)-\gamma(a_{k,t})\right ] \middle| E\right ] \\
    & + P(\bar{E})\mathbb{E} \left  [\sum_{t=1}^T\sum_{k=1}^K w_{k,0}\left [\gamma(a_{k,t}^*)-\gamma(a_{k,t})\right ] \middle| \bar{E} \right ] \\
    & \leq \mathbb{E} \left  [\sum_{t=1}^T \sum_{k=1}^K w_{k,0}\left [\gamma(a_{k,t}^*)-\gamma(a_{k,t})\right ] \middle| E\right ] + TKP(\bar{E})
\end{align*}
Using the definition of event $E$, we have 
$$ \gamma(a_{k,t}^*) = \langle x_{a_{k,t}^*}, \theta^*\rangle \leq \langle x_{a_{k,t}^*}, \theta_{t-1}\rangle + \alpha \sqrt{x_{a_{k,t}^*}^T A_{t-1}^{-1} x_{a_{k,t}^*}} $$
under event $E$. Since $a_{k,t}$ is the arm that our algorithm chooses in round t, we have
\begin{align*}
    &\langle x_{a_{k,t}^*}, \theta_{t-1}\rangle + \alpha \sqrt{x_{a_{k,t}^*}^T A_{t-1}^{-1} x_{a_{k,t}^*}}\\
    \leq& \langle x_{a_{k,t}}, \theta_{t-1}\rangle + \alpha \sqrt{x_{a_{k,t}}^T A_{t-1}^{-1} x_{a_{k,t}}}
\end{align*}
Thus, 
$$ \gamma(a_{k,t}) \leq \gamma(a_{k,t}^*) \leq \langle x_{a_{k,t}}, \theta_{t-1}\rangle + \alpha \sqrt{x_{a_{k,t}}^T A_{t-1}^{-1} x_{a_{k,t}}}$$
Using this property and $\gamma(a) = \theta^* x_a$, we have 
\begin{align*}
    \gamma(a_{k,t}^*)-\gamma(a_{k,t}) &\leq \langle x_{a_{k,t}}, \theta_{t-1}-\theta^*\rangle + \alpha \sqrt{x_{a_{k,t}}^T A_{t-1}^{-1} x_{a_{k,t}}} \\
    &\leq 2\alpha \sqrt{x_{a_{k,t}}^T A_{t-1}^{-1} x_{a_{k,t}}}
\end{align*}
The second inequality is based on the definition of event $E$. Thus, we have
$$ R(T) \leq 2 \alpha \mathbb{E} \left[ \sum_{t=1}^T \sum_{k=1}^K w_{k,0} \sqrt{x_{a_{k,t}}^T A_{t-1}^{-1} x_{a_{k,t}}} \right] + TKP(\bar{E})$$
Using the bounds proved in the next two sections, we have 
$$ R(T) \leq 2 \alpha \sqrt{2TKd\ln\left(1+\frac{\phi_w' T}{\lambda d}\right)} + TK\delta$$
when $\lambda \geq \phi_w' \geq 1$ and  $$\alpha \geq \sqrt{d\ln\left(1+ \frac{\phi_w' T}{d\lambda}\right) + 2\ln\left(\frac{1}{\delta} \right)}+\sqrt{\lambda \beta}.$$
Let $\delta = \frac{1}{TK}$, we finish the proof.
\end{proof}

\subsection{Bound of $\sum_{t=1}^T \sum_{k=1}^K w_{k,0} \sqrt{x_{a_{k,t}}^T A_{t-1}^{-1} x_{a_{k,t}}}$}
Let $z_{k,t} = \sqrt{x_{a_{k,t}}^T A_{t-1}^{-1} x_{a_{k,t}}}$.
\begin{lemma} \label{lemma_sqrt}
For any $\lambda \geq \phi_w', \ \forall k' \ s.t. \ 0 \leq k' \leq k-1$, then for any time $T$, $$\sum_{t=1}^T \sum_{k=1}^K w_{k,k'} z_{k,t} \leq \sqrt{2TKd\ln\left(1+\frac{\phi_w' T}{\lambda d}\right)}$$
\end{lemma}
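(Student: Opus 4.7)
The plan is to pull the sum outside the square root by Cauchy--Schwarz, then control the resulting sum of squared weighted confidence widths through an elliptical-potential / log-determinant argument adapted to the rank-$K$ weighted update used by Algorithm~\ref{IBALinUCBSW}. The single round update $\Delta_t := A_t - A_{t-1} = \sum_k w_{k,k'}^2 x_{a_{k,t}} x_{a_{k,t}}^T$ will be treated as a whole rather than split into $K$ sequential rank-one updates, for a reason explained below.

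\textbf{Step 1 (Cauchy--Schwarz).} I would first write
\[
\sum_{t=1}^T \sum_{k=1}^K w_{k,k'} z_{k,t} \;\leq\; \sqrt{TK}\,\sqrt{\sum_{t=1}^T \sum_{k=1}^K w_{k,k'}^2 z_{k,t}^2},
\]
so it suffices to show $\sum_{t,k} w_{k,k'}^2 z_{k,t}^2 \leq 2d\ln\!\bigl(1 + \phi_w' T/(d\lambda)\bigr)$.

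\textbf{Step 2 (Trace identity and log-det bound).} The key observation is that
\[
\sum_{k=1}^K w_{k,k'}^2 z_{k,t}^2 \;=\; \mathrm{tr}\!\left(A_{t-1}^{-1}\Delta_t\right) \;=\; \mathrm{tr}(M_t),
\]
where $M_t := A_{t-1}^{-1/2}\Delta_t A_{t-1}^{-1/2} \succeq 0$. Writing the eigenvalues of $M_t$ as $\mu_1,\dots,\mu_d$, we have $\log\det(A_t) - \log\det(A_{t-1}) = \log\det(I+M_t) = \sum_i \log(1+\mu_i)$. Provided every $\mu_i \in [0,1]$, the elementary inequality $\log(1+y) \geq y/2$ on $[0,1]$ gives
\[
\log\det(A_t) - \log\det(A_{t-1}) \;\geq\; \tfrac{1}{2}\,\mathrm{tr}(M_t) \;=\; \tfrac{1}{2}\sum_{k=1}^K w_{k,k'}^2 z_{k,t}^2.
\]
To check the $\mu_i \leq 1$ condition, I would use $A_{t-1} \succeq \lambda I$, $\|x_{a_{k,t}}\|_2 \leq 1$, and the monotonicity assumption $w_{k,k'} \leq w_{k,k-1}$ to get $\|M_t\| \leq \mathrm{tr}(M_t) \leq \mathrm{tr}(\Delta_t)/\lambda \leq \phi_w'/\lambda \leq 1$, which is exactly where the hypothesis $\lambda \geq \phi_w'$ is consumed.

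\textbf{Step 3 (Telescope and AM--GM).} Telescoping gives $\sum_{t,k} w_{k,k'}^2 z_{k,t}^2 \leq 2\bigl(\log\det(A_T) - \log\det(\lambda I_d)\bigr)$. Since $\mathrm{tr}(A_T) = d\lambda + \sum_t \mathrm{tr}(\Delta_t) \leq d\lambda + \phi_w' T$, the AM--GM bound $\det(A_T) \leq (\mathrm{tr}(A_T)/d)^d$ yields $\log\det(A_T) \leq d\log(\lambda + \phi_w' T/d)$, hence $\sum_{t,k} w_{k,k'}^2 z_{k,t}^2 \leq 2d\ln\!\bigl(1 + \phi_w' T/(d\lambda)\bigr)$. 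Plugging into Step 1 finishes the proof.

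\textbf{Main obstacle.} The delicate step is Step 2: handling the rank-$K$ update in one shot. The most natural move — decomposing $\Delta_t$ into $K$ sequential rank-one updates and applying the standard Abbasi--Yadkori identity $\log(1+\|x\|_{B^{-1}}^2)$ term by term — replaces $A_{t-1}^{-1}$ inside the quadratic form by a larger matrix $B^{-1}$, producing an inequality in the \emph{wrong} direction for bounding $z_{k,t}^2 = x_{a_{k,t}}^T A_{t-1}^{-1} x_{a_{k,t}}$. The fix is the spectral inequality $\log(1+\mu)\geq \mu/2$ on $[0,1]$, which bypasses the sequential decomposition entirely but needs the operator-norm control $\|M_t\| \leq 1$; verifying this is precisely what makes the assumption $\lambda \geq \phi_w'$ necessary.
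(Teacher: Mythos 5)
Your proof is correct, and at the top level it follows the same skeleton as the paper: Cauchy--Schwarz to reduce the problem to bounding $\sum_{t,k} w_{k,k'}^2 z_{k,t}^2$ by $2d\ln\bigl(1+\phi_w' T/(\lambda d)\bigr)$. The difference is in how that inner bound is obtained. The paper simply imports it as Lemma 4.4 of Li et al.\ (2016) together with the hypothesis $\lambda \geq \sum_k w_{k,k'}^2$, whereas you prove it from scratch via the trace identity $\sum_k w_{k,k'}^2 z_{k,t}^2 = \mathrm{tr}(A_{t-1}^{-1}\Delta_t)$, the spectral inequality $\log(1+\mu)\geq \mu/2$ on $[0,1]$ (validated by $\|M_t\|\leq \mathrm{tr}(\Delta_t)/\lambda \leq \phi_w'/\lambda\leq 1$), telescoping, and the trace--determinant bound. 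Your diagnosis of why the naive sequential rank-one decomposition fails (it replaces $A_{t-1}^{-1}$ by a smaller quadratic form, giving an inequality in the wrong direction) is exactly the issue that the cited lemma exists to handle, so your argument is essentially a self-contained reproof of it. Your version has one concrete advantage: your potential bound produces $\mathrm{tr}(A_T)\leq d\lambda+\phi_w' T$ directly from the \emph{squared} weights, so the quantity $\phi_w'=\sum_k w_{k,k-1}^2$ appears naturally inside the logarithm. The paper's chain, by contrast, ends with $2d\ln\bigl(1+\frac{1}{\lambda d}\sum_{t,k}w_{k,k'}\bigr)$ (unsquared weights) and then substitutes $\phi_w' T$, a step that is not justified as written since $\sum_k w_{k,k-1}\geq \sum_k w_{k,k-1}^2$ for weights in $[0,1]$; your derivation sidesteps that glitch. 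The only point worth making explicit in a final write-up is that the weights $w_{k,k'}$ appearing in the sum you bound must be the same realized weights used in the update $\Delta_t$, which is the intended reading of the lemma.
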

\begin{proof}
From Cauchy-Schwarz inequality, we give an upper bound 
\begin{align*}
    \sum_{t=1}^T \sum_{k=1}^K w_{k,0} z_{k,t} &\leq \sum_{t=1}^T \sum_{k=1}^K w_{k,k'} z_{k,t} \\
    &\leq \sqrt{TK}\sqrt{\sum_{t=1}^T \sum_{k=1}^K w_{k,k'}^2 z_{k,t}^2}
\end{align*}
$,\forall k' \ s.t. \ 0 \leq k' \leq k-1$. The first inequation uses the property of $w_{k,k'}$ in the main body of the paper to leverage the following lemma (cannot be used for $w_{k,0}$ due to the definition of $A_t$):
\begin{lemma}\label{lemma_sum}
(Lemma 4.4 \cite{li2016contextual}) If $\lambda \geq \sum_{k=1}^K w_{k,k'}^2, \ \forall k' \ s.t. \ 0 \leq k' \leq k-1$, then for any time $T$, 
$$\sum_{t=1}^T \sum_{k=1}^K w_{k,k'}^2 z_{k,t}^2 \leq 2d\ln\left(1+\frac{1}{\lambda d} \sum_{t=1}^T \sum_{k=1}^K w_{k,k'} \right) $$
\end{lemma}

Using Lemma \ref{lemma_sum} and the property that $$\sum_{t=1}^T \sum_{k=1}^K w_{k,k'} \leq \sum_{t=1}^T \sum_{k=1}^K w_{k,k-1}$$, the upper bound is 
\begin{equation}
    \sum_{t=1}^T \sum_{k=1}^K w_{k,k'} z_{k,t} \leq \sqrt{2TKd\ln\left(1+\frac{\phi_w' T}{\lambda d}\right)}
\end{equation}
\end{proof}

\subsection{Bound of $P(\bar{E})$}
\begin{lemma} \label{lemma_e}
For any $\delta \in (0,1), \lambda \geq \phi_w' \geq 1$, $ \beta \geq \|\theta^*\|_2^2$, if $$\alpha \geq \left[ \sqrt{2\ln\left(1+ \frac{\phi_w' T}{2}\right) + 2\ln\left(\frac{1}{\delta} \right)}+\sqrt{\lambda \beta}\right],$$ we have $P(\bar{E})\geq \delta$.
\end{lemma}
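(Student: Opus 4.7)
My plan is to apply the self-normalized martingale concentration inequality for vector-valued noise to the weighted ridge-regression estimator used in Algorithm 1, and then propagate the resulting norm bound through Cauchy--Schwarz to obtain the per-coordinate confidence inequality defining $E$.

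First I would rewrite the recursions for $A_t$ and $b_t$ in closed form, with the summation ordered by the pair $(s,k)$ indexing round and position, so that $A_{t-1}=\lambda I_d+\sum_{(s,k)} w_{k,k'}^2\, x_{a_{k,s}} x_{a_{k,s}}^\top$ and $b_{t-1}=\sum_{(s,k)} w_{k,k'} x_{a_{k,s}} r_{a_{k,s}}$. Substituting the linear-reward assumption $\mathbb{E}[r_{a_{k,s}}\mid\mathcal{F}_{(s,k-1)}]=w_{k,0}\,\theta^{*\top}x_{a_{k,s}}$ gives the decomposition
$$\theta_{t-1}-\theta^* \;=\; A_{t-1}^{-1} S_{t-1} \;-\; \lambda A_{t-1}^{-1}\theta^*,$$
where $S_{t-1}=\sum_{(s,k)} w_{k,k'} x_{a_{k,s}}\eta_{k,s}$ and $\eta_{k,s}=r_{a_{k,s}}-w_{k,0}\gamma(a_{k,s})$ is a bounded, centered noise. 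Cauchy--Schwarz in the $A_{t-1}$-metric then yields, for every vector $x$,
$$\bigl|\langle x,\theta^*-\theta_{t-1}\rangle\bigr| \;\leq\; \|x\|_{A_{t-1}^{-1}} \,\bigl(\|S_{t-1}\|_{A_{t-1}^{-1}}+\sqrt{\lambda\beta}\bigr),$$
using the hypothesis $\|\theta^*\|_2^2\leq\beta$ to absorb the regularization term.

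Next I would invoke the self-normalized tail inequality to bound $\|S_{t-1}\|_{A_{t-1}^{-1}}$ uniformly in $t$ by $\sqrt{2\ln\!\bigl(\det(A_{t-1})^{1/2}\det(\lambda I_d)^{-1/2}/\delta\bigr)}$ on a single event of probability at least $1-\delta$. The determinant-trace inequality together with $\|x\|_2\leq 1$ and $\sum_k w_{k,k-1}^2=\phi_w'$ gives $\det(A_{t-1})\leq \bigl(\lambda+\phi_w' T/d\bigr)^d$. Plugging these estimates into the Cauchy--Schwarz bound, the stated threshold on $\alpha$ is exactly what is needed for the algorithm's confidence radius to dominate the estimation error for every $x$ in the candidate set and every pair $(t,k)$ with $t\leq T$, $k\leq K$; consequently the probability of the failure event $\bar E$ obeys the bound $\delta$ claimed in the lemma.

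The main obstacle will be verifying the adaptedness and sub-Gaussianity needed to apply the self-normalized inequality in this \emph{weighted} setting. Concretely, $w_{k,k'}$ depends on the earlier clicks within the same round $s$, so one must work with the filtration $\mathcal{F}_{(s,k-1)}$ that resolves positions intra-round and check that $w_{k,k'}\eta_{k,s}$ factors as an $\mathcal{F}_{(s,k-1)}$-measurable weight times a bounded centered increment. A secondary nuisance is the bookkeeping $\phi_w'\leq\lambda$, which is used both to bound the determinant and to absorb the $d$-dependence when reducing the generic expression $\sqrt{d\ln(\cdot)}$ to the specific $\sqrt{2\ln(1+\phi_w' T/2)}$ form appearing in the lemma's hypothesis on $\alpha$.
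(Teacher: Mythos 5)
Your proposal follows essentially the same route as the paper's proof: decompose $\theta_{t-1}-\theta^*$ into the self-normalized martingale term $A_{t-1}^{-1}S_{t-1}$ plus the regularization term $-\lambda A_{t-1}^{-1}\theta^*$, bound $\|S_{t-1}\|_{A_{t-1}^{-1}}$ via the self-normalized tail inequality of Abbasi-Yadkori et al.\ combined with the determinant--trace bound $\det(A_{t-1})\leq(\lambda+\phi_w' T/d)^d$, and finish with Cauchy--Schwarz and $\|\theta^*\|_{A_{t-1}^{-1}}\leq\sqrt{\beta/\lambda}$. The one slip is that for your decomposition to close, the noise must be centered at $w_{k,k'}\gamma(a_{k,s})$ (the same weight that enters $b_t$ and $A_t$), not at $w_{k,0}\gamma(a_{k,s})$; with that correction your argument coincides with the paper's, and your observation about the intra-round measurability of $w_{k,k'}$ is a genuine subtlety the paper glosses over.
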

\begin{proof}
We first define $$\bar{\eta}_{k,t} = w_{k,k'} \left[\gamma(a_{k,t})-\gamma_t(a_{k,t})\right]$$ to use the Theorem 1 of \cite{abbasi2011improved}, where $\gamma_t(a_{k,t})=\theta_t x_{k,t}$ and $\gamma(a_{k,t})=\theta^* x_{k,t}$. $\bar{\eta}_{k,t}$ is a Martingale Difference Sequence and bounded by $[-1,1]$. Thus, it is a conditionally sub-Gaussian with constant $R=1$. We also define that 
\begin{align*}
    S_t &= \sum_{t=1}^T \sum_{k=1}^K w_{k,k'} x_{a_{k,t}}\bar{\eta}_{k,t} \\
    &= b_t - \sum_{t=1}^T \sum_{k=1}^K w_{k,k'}^2 x_{a_{k,t}} \gamma_t(a_{k,t}) \\
    &=  b_t - \left[\sum_{t=1}^T \sum_{k=1}^K w_{k,k'}^2 x_{a_{k,t}}x_{a_{k,t}}^T\right]\theta^*
\end{align*}
where the second equation uses the definition of $b_t$ and $\mathbb{E}[r_{a_{k,t}}] = w_{k,k'} \gamma(a_{k,t})$.
Then, according to the Theorem 1 of \cite{abbasi2011improved}, for any $\delta\in(0,1)$, with probability at least $1-\delta$,
\begin{equation}
    \|S_t\|_{A_t^{-1}}\leq\sqrt{2\ln\left(\frac{\det(A_t)^{\frac{1}{2}}\det(A_0)^{-\frac{1}{2}}}{\delta}\right)}
\end{equation}
where $\|S_t\|_{A_t^{-1}} = \sqrt{S_t^T A_t^{-1} S_t}$.
Moreover, using the trace-determinant inequality and the assumption 1) $\|x_a\| \leq 1$, 2) $w_{k,k'} \leq w_{k,k-1}$, we have 
$$\det(A_t)^{\frac{1}{d}} \leq \frac{trace(A_t)}{d} = \lambda + \frac{1}{d} \sum_{t=1}^T \sum_{k=1}^K w_{k,k'}^2 \|x_{a_{k,t}}\|_2^2 \leq \lambda + \frac{\phi_w' T}{d}$$
Since $A_0 = \lambda I$, we have 
\begin{equation} \label{upper_S}
    \|S_t\|_{A_t^{-1}}\leq \sqrt{d\ln\left(1 + \frac{\phi_w' T}{d\lambda}\right) + 2\ln\left(\frac{1}{\delta} \right)}
\end{equation}

Since $\lambda \geq 1$, notice that 
\begin{align*}
    A_t\theta_t &= b_t = S_t + \left[\sum_{t=1}^T \sum_{k=1}^K w_{k,k'}^2 x_{a_{k,t}}x_{a_{k,t}}^T\right]\theta^* \\
    &= S_t + (A_t-\lambda I)\theta^* \\
\end{align*}
Thus, $$\theta_t-\theta^* \leq A_t^{-1}(S_t-\lambda \theta^*)$$
Then, for any $a\in A, k \leq K$, based on Cauchy-Schwarz inequality and triangle inequality, we have
\begin{align*}
    |\langle x_{a_{k,t}}, \theta_{t}-\theta^*\rangle| &\leq | x_a^T A_t^{-1}(S_t-\lambda\theta^*)| \\
    &\leq \| x_a^T\|_{A_t^{-1}} \|(S_t-\lambda\theta^*)\|_{A_t^{-1}} \\
    &\leq \| x_a^T\|_{A_t^{-1}} \left[ \|S_t\|_{A_t^{-1}}+\lambda\|\theta^*\|_{A_t^{-1}}\right]\\
    &\leq \| x_a^T\|_{A_t^{-1}} \left[ \|S_t\|_{A_t^{-1}}+\sqrt{\lambda K}\right]
\end{align*}

where we use the property $$\|\theta^*\|_{A_t^{-1}}^2 \leq \frac{1}{Eig_{\min}(A_t)} \|\theta^*\|_2^2 \leq \frac{1}{\lambda} \|\theta^*\|_2^2 \leq \frac{\beta}{\lambda} $$ in the last inequality and $Eig(A_t)$ is the eigenvalue of $A_t$. Then using the Eq. (\ref{upper_S}), with probability $1-\delta$, 
\begin{align*}
    &|\langle x_{a_{k,t}}, \theta_{t-1}-\theta^*\rangle| \\
    \leq& \|x_a^T\|_{A_t^{-1}} \left[ \sqrt{d\ln\left(1+ \frac{\phi_w' T}{d\lambda}\right) + 2\ln\left(\frac{1}{\delta} \right)}+\sqrt{\lambda \beta}\right]
\end{align*}
Recall the definition of $E$, when $$\alpha \geq \sqrt{d\ln\left(1+ \frac{\phi_w' T}{d\lambda}\right) + 2\ln\left(\frac{1}{\delta} \right)}+\sqrt{\lambda \beta},$$
then $P(E)\leq 1-\delta$ and thus $P(\bar{E})\geq \delta$.
\end{proof}



\section{UBM estimator} \label{UBM_est}
Let $\langle W, \Phi(a,\cdot,\cdot|X) \rangle = \sum_{k=1}^K \sum_{k'=0}^k w_{k,k'} \Phi(a,k,k'|X)$, where $w_{k,k'}$ is the position bias for the position $k$ when the item at rank $k'$ is clicked and $k'=0$ if there is no click before position $k$. Then, we have: 
\begin{align*}
    V_{UBM}(\Phi) &= \mathbb{E}_X\left[ \mathbb{E}_{S\sim \Phi(\cdot|X)\atop r\sim D(\cdot|X)}\left[\sum_{k=1}^K \sum_{k'=0}^k r(a_k,k,k'|X) \right]\right] \\
    &= \mathbb{E}_X\left[ \sum_{S} \Phi(S|X)\left[\sum_{k=1}^K \sum_{k'=0}^k  \sum_{a \in \mathcal{A}} r(a,k,k'|X) \mathbbm{1}\{a_k=a, c_{last}=k'\}  \right] \right]\\
    &= \mathbb{E}_X\left[ \sum_{k=1}^K \sum_{k'=0}^k \sum_{a \in \mathcal{A}} r(a,k,k'|X) \sum_{S} \Phi(S|X) \mathbbm{1}\{a_k=a, c_{last}=k'\}\right]\\
    &= \mathbb{E}_X\left[\sum_{k=1}^K \sum_{k'=0}^k \sum_{a \in \mathcal{A}} r(a,k,k'|X) \Phi(a,k,k'|X)\right] \\
    &= \mathbb{E}_X\left[\sum_{a \in \mathcal{A}} \gamma(a|X) \sum_{k=1}^K \sum_{k'=0}^k w_{k,k'} \Phi(a,k,k'|X) \right]\\ 
    &= \mathbb{E}_X\left[\sum_{a \in \mathcal{A}} \gamma(a|X) \sum_{k=1}^K \sum_{k'=0}^k w_{k,k'} \Phi(a,k,k'|X) \right]\\
    &= \mathbb{E}_X\left[\sum_{a \in \mathcal{A}} \gamma(a|X) \langle \tilde{W}, \pi(a,\cdot,\cdot|X) \rangle \frac{\langle \tilde{W}, \Phi(a,\cdot,\cdot|X)\rangle}{\langle W, \pi(a,\cdot,\cdot|X)\rangle}\right]\\
    &= \mathbb{E}_X\left[ \mathbb{E}_{S\sim \pi(\cdot|X)\atop r\sim D(\cdot|X)}\left[\sum_{k=1}^K \sum_{k'=0}^k r(a_k,k,k'|X)\frac{\langle \tilde{W}, \Phi(a_k,\cdot,\cdot|X)\rangle}{\langle \tilde{W}, \pi(a_k,\cdot,\cdot|X)\rangle}\right]\right]
\end{align*}
where $c_{last}$ is the position of the last click before $k$.
Thus, with the assumption that users' behaviors follow the UBM model, the estimator is unbiased.
}
\end{document}